\documentclass[letterpaper]{article} 
\usepackage{aaai24}  
\usepackage{times}  
\usepackage{helvet}  
\usepackage{courier}  
\usepackage[hyphens]{url}  
\usepackage{graphicx} 
\urlstyle{rm} 
\usepackage{natbib}  
\usepackage{caption} 
\frenchspacing  
\setlength{\pdfpagewidth}{8.5in}  
\setlength{\pdfpageheight}{11in}  
%
\usepackage{algorithm}
\usepackage{algorithmic}

\usepackage{amsmath}
\usepackage{amsthm}
\usepackage{amsfonts} 

\newtheorem{theorem}{Theorem}[section]

\usepackage{stmaryrd}
\usepackage{booktabs}
\usepackage{multirow}

%
\usepackage{newfloat}
\usepackage{listings}
\DeclareCaptionStyle{ruled}{labelfont=normalfont,labelsep=colon,strut=off} 
\lstset{%
	basicstyle={\footnotesize\ttfamily},
	numbers=left,numberstyle=\footnotesize,xleftmargin=2em,
	aboveskip=0pt,belowskip=0pt,%
	showstringspaces=false,tabsize=2,breaklines=true}
\floatstyle{ruled}
\newfloat{listing}{tb}{lst}{}
\floatname{listing}{Listing}
%
\pdfinfo{
/TemplateVersion (2024.1)
}

\setcounter{secnumdepth}{2} 

%


\title{Rethinking Dimensional Rationale in Graph Contrastive Learning\\ from Causal Perspective}
\author{
    Qirui Ji\textsuperscript{\rm 1 \rm 3} \equalcontrib,
    Jiangmeng Li\textsuperscript{\rm 1 \rm 2}  \equalcontrib\thanks{Corresponding author.},
    Jie Hu\textsuperscript{\rm 4},
    Rui Wang\textsuperscript{\rm 1 \rm 2 \rm 3},
    Changwen Zheng\textsuperscript{\rm 1 \rm 3},
    Fanjiang Xu\textsuperscript{\rm 1 \rm 3}
}
\affiliations{
    \textsuperscript{\rm 1}Science \& Technology on Integrated Information System Laboratory, Institute of Software Chinese Academy of Sciences\\
    \textsuperscript{\rm 2}State Key Laboratory of Intelligent Game\\
    \textsuperscript{\rm 3}University of Chinese Academy of Sciences\\
    \textsuperscript{\rm 4}State Key Laboratory of Computer Science, Institute of Software Chinese Academy of Sciences

    \{jiqirui2022, jiangmeng2019, wangrui, changwen, fanjiang\}@iscas.ac.cn, hujie@ios.ac.cn
%
}

\usepackage{bibentry}

\begin{document}

\maketitle

\begin{abstract}
Graph contrastive learning is a general learning paradigm excelling at capturing invariant information from diverse perturbations in graphs. Recent works focus on exploring the \textit{structural} rationale from graphs, thereby increasing the discriminability of the invariant information. However, such methods may incur in the mis-learning of graph models towards the interpretability of graphs, and thus the learned noisy and task-agnostic information interferes with the prediction of graphs. 
To this end, with the purpose of exploring the intrinsic rationale of graphs, we accordingly propose to capture the \textit{dimensional} rationale from graphs, which has not received sufficient attention in the literature. 
The conducted exploratory experiments attest to the feasibility of the aforementioned roadmap. To elucidate the innate mechanism behind the performance improvement arising from the dimensional rationale, we rethink the dimensional rationale in graph contrastive learning from a causal perspective and further formalize the causality among the variables in the pre-training stage to build the corresponding structural causal model. On the basis of the understanding of the structural causal model, we propose the dimensional rationale-aware graph contrastive learning approach, which introduces a learnable dimensional rationale acquiring network and a redundancy reduction constraint. The learnable dimensional rationale acquiring network is updated by leveraging a bi-level meta-learning technique, and the redundancy reduction constraint disentangles the redundant features through a decorrelation process during learning. Empirically, compared with state-of-the-art methods, our method can yield significant performance boosts on various benchmarks with respect to discriminability and transferability. The code implementation of our method is available at \url{https://github.com/ByronJi/DRGCL}.
\end{abstract}

\section{Introduction}


Graph contrastive learning (GCL) is a general learning paradigm excelling at seeking to understand invariant information from diverse perturbations in graphs \cite{graphcl, adgcl, joao, simgrace}. 
However, most of these methods focus on building sophisticated data augmentations for GCL, while the intrinsic interpretability in graph representations is not explored, such that the theoretical guarantee for the performance improvement arising from adopting such approaches is insufficient, and the model trained by following these methods may learn stochastic noisy and task-agnostic information, thereby confusing the prediction on downstream tasks. Therefore, the graph rationale exploration is provoked to understand the knowledge driving the model to make certain predictions \cite{dir}, where rationale is a specific subset of graph features, e.g., graph structure, which can guide or explain the model's predictions \cite{gnnexplainer}. 
Successes achieved by RGCL \cite{rgcl} demonstrate that exploring rationales in graphs can indeed promote the model to learn discriminative representations in GCL. 
RGCL focuses on exploring the \textit{structural} rationale (SR) from graphs, i.e., the structure containing specific edges or nodes that are correlated with the prediction of graphs. However, the features contained by the nodes or messages passing through the edges may still include certain discriminative information. Thus, arbitrarily removing or assigning weights to the graph structure can undermine the discriminability of the learned representation. Concretely, we raise a crucial question:

\textbf{\textit{``Does there exist a manner to explore the intrinsic rationale in graphs, thereby improving the GCL predictions?''}}

With the question in mind, we conduct exploratory experiments with GraphCL \cite{graphcl} on the biochemical molecule dataset PROTEINS, and the social network dataset REDDIT-BINARY (RDT-B), where we randomly preserve certain dimensions, i.e., a subset of the representations, while blocking the others. The experimental results are illustrated in Figure \ref{fig:motivation experiment}. We observe from the results and find that the graph representations only preserving specific dimensions indeed achieve better performance than the primitive representations, and such dimensions are treated as \textit{dimensional} rationales (DRs) for the graph. The exploratory experiments jointly prove the existence of DRs and the positive effects of specific DRs in the prediction of GCL. 
The intuition behind the experimental exploration is that compared with the SR, the DR is intrinsic to the representations learned by GCL methods, which can tackle the long-standing issue of the SR and further achieve the desideratum that jointly preserves the discriminative information and blocks the task-agnostic information of representations. We provide theoretical analysis to demonstrate that the proposed DR can degenerate into the conventional SR for GCL. 

\begin{figure*}[t]
    \centering	\includegraphics[width=\textwidth]{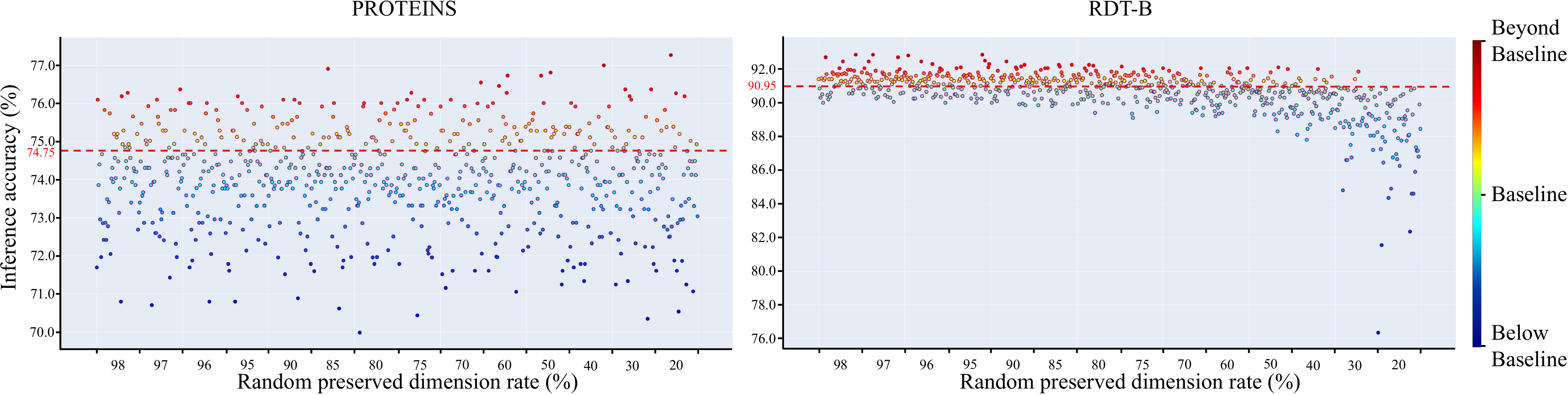}
	\caption{Experimental scatter diagrams obtained by GraphCL with randomly preserving dimensions on PROTEINS and RDT-B datasets. The red dashed lines denote the performance achieved by the primitive representation of GraphCL. The colored scattered points denote the downstream classification performance of embeddings with certain dimensions preserved. Note that the unreserved dimensions are directly valued by 0. The experimental principle emerges from the intuition that the prediction on downstream tasks may be significantly affected if the multi-dimensional representations are perturbed.}
	\label{fig:motivation experiment}
\end{figure*}

However, the innate mechanism of the performance improvement brought about by introducing the DR is not sufficiently explored, such that we rethink the DR of graphs from the causal perspective and accordingly develop a structural causal model (SCM). By understanding the SCM, we disclose a counter-intuitive conclusion: the acquired graph DR is determined as a causal confounder in GCL. The reason is that the principle of unsupervised learning incurs the inconsistent variation of the acquired graph DR, such that the acquired rationale may improve or degenerate the model performance on downstream tasks, and inspired by the theory of causal inference \cite{pearl2000models,glymour2016causal}, we assert that the acquired graph DR is a causal confounder. The theoretical analysis can further be proved by the empirical evidence. Accordingly, as shown in Figure \ref{fig:motivation experiment}, we observe that the points with different preserved dimension rates are scattered around the baseline dashes, which proves that the inconsistency of the graph DRs may improve or degenerate the model performance. 


To this end, we intuitively propose the \textit{\underline{D}imensional \underline{R}ationale-aware \underline{G}raph \underline{C}ontrastive \underline{L}earning}, namely \textit{DRGCL}, which initially acquires the graph DRs and further adopts the backdoor adjustment technique \cite{glymour2016causal}. Specifically, we introduce the learnable graph DR acquiring network, which is trained by adopting a bi-level meta-learning technique. To extend the representation space of the acquired DR, we apply the graph DR redundancy reduction as a regularization term during training. We provide solid theoretical analyses to prove the validity of DRGCL. Empirically, we compare our method with various baselines on benchmark graph datasets, which further demonstrates the effectiveness of DRGCL. \textbf{Contributions}:
\begin{itemize}
    \item We present a heuristic experiment to demonstrate the existence of the graph DR and further provide theoretical analysis to prove that compared with the conventional graph SR, the graph DR is more intrinsic to GCL.
    
    \item We formalize the mechanism of introducing DRs by building an SCM and demonstrate that the acquired DR is a causal confounder in GCL with sufficient theoretical and empirical evidence.
    
    \item We propose DRGCL to acquire redundancy-against DRs and perform the backdoor adjustment on SCM, thereby consistently improving GCL performance.
    
    \item We provide solid theoretical and experimental analyses, which jointly demonstrate the effectiveness of our method in terms of discriminability and transferability.
\end{itemize}

\section{Related Works}

\subsection{Graph Contrastive Learning}
Many methods have been used to study graph-level contrastive learning.
GraphCL \cite{graphcl} designs four types of general augmentations for GCL. 
ADGCL \cite{adgcl} optimizes adversarial graph augmentation strategies  to prevent Graph Neural Networks (GNNs) from capturing redundant information.
JOAO \cite{joao} selects augmentation pairs in GraphCL by an automated approach to solve the trial-and-errors. 
SimGRACE \cite{simgrace} utilizes an original GNN model and its perturbed version as encoders to obtain correlated views further avoiding the cost of trial-and-errors.
RGCL \cite{rgcl} uses GNNExplainer \cite{gnnexplainer} to find 
invariant SRs to dig discriminative information.   
Our method applies DRs to our models, which can find more intrinsic discriminability. 

\subsection{Graph Rationalization}
Rationalization in Graphs has two research directions: post-hoc explainability and intrinsic interpretability. Post-hoc explainability uses separate methods \cite{gnnexplainer,cf2explainer} to attribute predictions to the input graph. Intrinsic interpretability integrates a rationalization module, e.g., graph attention \cite{gat,dir} or graph pooling \cite{DBLP:conf/icml/LeeLK19,DBLP:conf/aaai/RanjanST20}, into the model. The rationalization module employs soft or hard masks on the input graph to guide the model's decisions. While existing methods use SRs from masked subgraphs to train the GNN, our method directly captures DRs within the graph embeddings.

\subsection{Causal Inference}
Causal inference \cite{pearl2000models,glymour2016causal} has been widely applied in computer science through deconfounding and counterfactual inference. Deconfounding methods \cite{disentangle,robust,DBLP:conf/icml/QiangLZ0X22,metaattention} estimate the direct causal effect behind confounders. Counterfactual inference \cite{DBLP:conf/www/TanGFGX0Z22} aims at finding the smallest change in the input which affects the
model’s prediction. Our work introduces the dimension confounder in GCL with an SCM. Guided by SCM, we utilize the backdoor adjustment to obtain the direct causal effect between the learned embedding and the predicted label.

\begin{figure}[t]
\begin{center}
\includegraphics[width=0.4\textwidth]{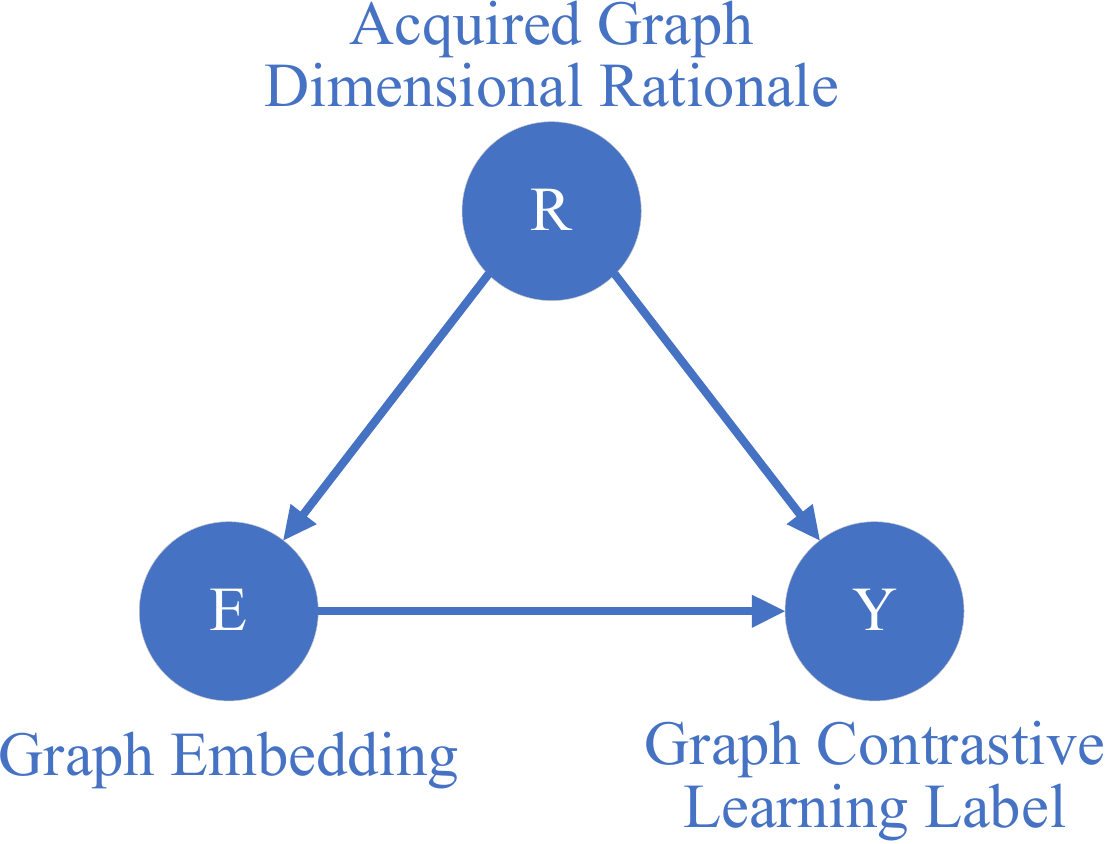}
\caption{SCM for GCL pretraining.}\label{fig:SCM}
\end{center}
\end{figure}

\section{Problem Formulations}
\label{sec:causal part}

\subsection{Graph Contrastive Learning}

Given a graph $\mathcal{G}$ sampled from the dataset of $M$ graphs, denoted as $\mathcal{G} \in \{ \mathcal{G}_m: m \in M \}$, we formulated the augmented graph $\hat{\mathcal{G}}$ by applying the augmentation distribution $\mathcal{T}(\hat{\mathcal{G}} \vert \mathcal{G})$. During pre-training, we sample a minibatch of $N$ graphs from $\mathcal{G}_m$ and denote it as $\mathcal{G}^\prime = \{ \mathcal{G}_n \}_{n=1}^N$, where $\mathcal{G}_{n}$ represents the n-th sample. We perform stochastic data augmentations to transform each sample $\mathcal{G}_{n}$ into two augmented views $\hat{\mathcal{G}}_{n,i}$ and $\hat{\mathcal{G}}_{n,j}$. Then $\hat{\mathcal{G}}_{n,i}$ and $\hat{\mathcal{G}}_{n,j}$ are fed into a feature extractor to get their feature representations $\boldsymbol{z}_{n,i}$ and $\boldsymbol{z}_{n,j}$. Then a GCL loss function is defined to enforce maximizing the consistency between positive pairs $\boldsymbol{z}_{n,i}$, $\boldsymbol{z}_{n,j}$, such as InfoNCE loss \cite{graphcl,simgrace}:
\begin{equation}\label{eq:infonce}\small
\mathcal{L}_{IN}=\sum_{n=1}^N -\log \frac{\exp \left(d\left(\boldsymbol{z}_{n, i}, \boldsymbol{z}_{n, j}\right) / \tau\right)}{\sum_{n^{\prime}=1, n^{\prime} \neq n}^N \exp \left(d\left(\boldsymbol{z}_{n, i}, \boldsymbol{z}_{n^{\prime}, j}\right) / \tau\right)},
\end{equation}
where $\tau$ denotes the temperature parameter and $d\left(\cdot, \cdot\right)$ denotes the cosine similarity function. 

\subsection{Structural Causal Model}
The intuition of our work comes from the investigation of the effects of preserving different DRs in the graph embedding, as shown in Figure \ref{fig:motivation experiment}. According to \cite{chaos}, the cross-entropy loss can be bounded by the contrastive loss, indicating that we can improve the performance on downstream classification tasks by enhancing the discriminability of representations learned by GCL during pre-training. Thus, improving the quality of the acquired DRs during pre-training derives the indirect promotion of the DRs acquired on downstream tasks.
To this end, we establish an SCM to elaborate the causality among the variables in GCL: graph embedding $E$, acquired graph DR $R$, and graph contrastive label $Y$. The SCM is depicted in Figure \ref{fig:SCM}, with each link representing a causality between two variables:
\begin{itemize}
    \item $E \rightarrow Y$. The graph embedding $E$ can directly affect the graph contrastive label $Y$.      
    \item $R \rightarrow E$. The acquired graph DR $R$ affects the learned features of the graph embeddings by contributing to the gradient effect of the training of the graph encoder, which further affects the graph embedding $E$.
    \item $R \rightarrow Y$. In GCL, the graph contrastive learning label is related to the anchor, such that the acquired graph DR $R$ causally affects the anchor due to the proposed iterative training paradigm of DRGCL.
\end{itemize}
According to the SCM, the acquired graph DR $R$ is the causal confounder between $E$ and $Y$ due to the causal effects of $R$ towards $E$ and $Y$.

\begin{figure*}[t]
    \centering	\includegraphics[width=0.8\textwidth]{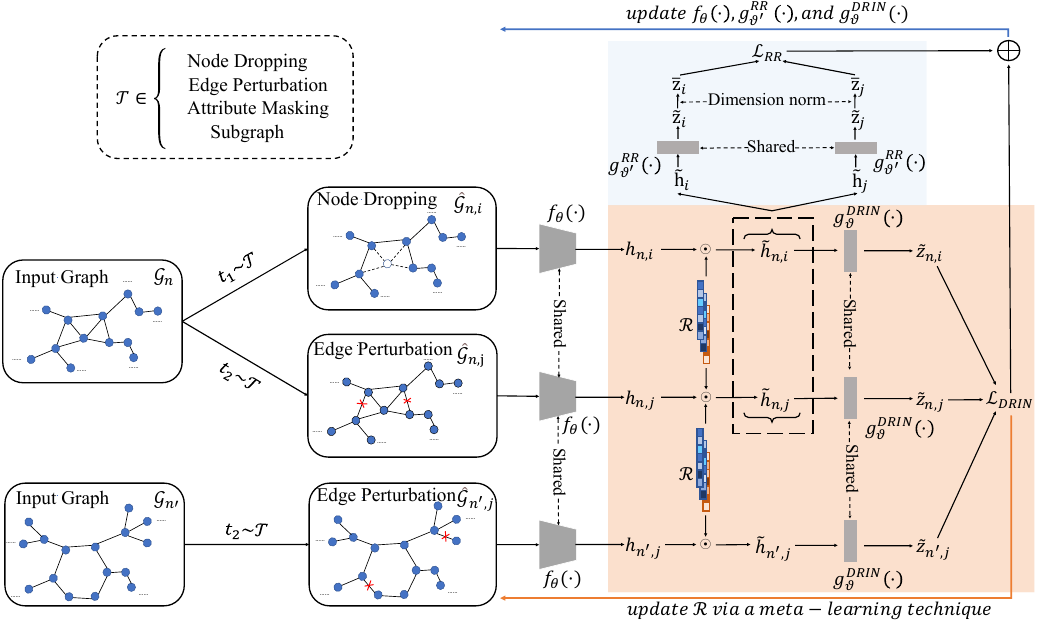}
	\caption{Illustration of DRGCL. 
 The solid blue line pointing backwards represents the regular training step. The solid red line pointing backwards represents the meta-learning step.
 }
	\label{fig:rationale acquiring network}
\end{figure*}

\subsection{Causal Intervention via Backdoor Adjustment}

\cite{pearl2000models} proposes the definition of the backdoor path to demarcate the scope of application of the backdoor criterion. 
In our SCM, there exists a backdoor path $E \leftarrow R \rightarrow Y$, resulting in the spurious correlation between $E$ and $Y$. Then, if we use $P(Y|E)$ to measure the causality between $E$ and $Y$ as the approach adopted by the conventional GCL methods, the task-irrelevant features would affect the downstream classification. To eliminate the causal effect of the backdoor path, we can intervene on the variable $E$ and condition on the confounding factor $R$. The adjustment formula can be written as follows:
\begin{equation}\label{eq:adjustment formula}\small
P(Y|do(E)) =\sum_r P(Y|E, R=r) P(R=r),
\end{equation} 
where $r$ denotes the value of $R$.
\section{Methodology}
\label{sec:method}

In this paper, we focus on developing a novel GCL learning framework which is shown in Figure \ref{fig:rationale acquiring network}. 

\subsection{Graph Dimensional Rationale Acquiring Network}
By following the data augmentations in GraphCL \cite{graphcl}, we sample two transformations $t_1$ and $t_2$ from the augmentation distribution $\mathcal{T}$ and further obtain two correlated views $\hat{\mathcal{G}}_{n,i}$ and $\hat{\mathcal{G}}_{n,j}$. Then we feed them into the GNN-based encoder $f_\theta(\cdot)$ to extract graph-level representations $\boldsymbol{h}_{n,i}$, $\boldsymbol{h}_{n,j}$. To acquire the DRs from the candidate graphs, we introduce a learnable DR weight, denoted as $\mathcal{R} = \left\{ \boldsymbol{\omega}_k \big| k \in \llbracket {1, D} \rrbracket \right\}$, where $D$ represents the number of graph embedding dimensions, which is treated as containing the shared knowledge with the acquired DR. 

\begin{figure}[t]
\begin{center}
\includegraphics[width=0.47\textwidth]{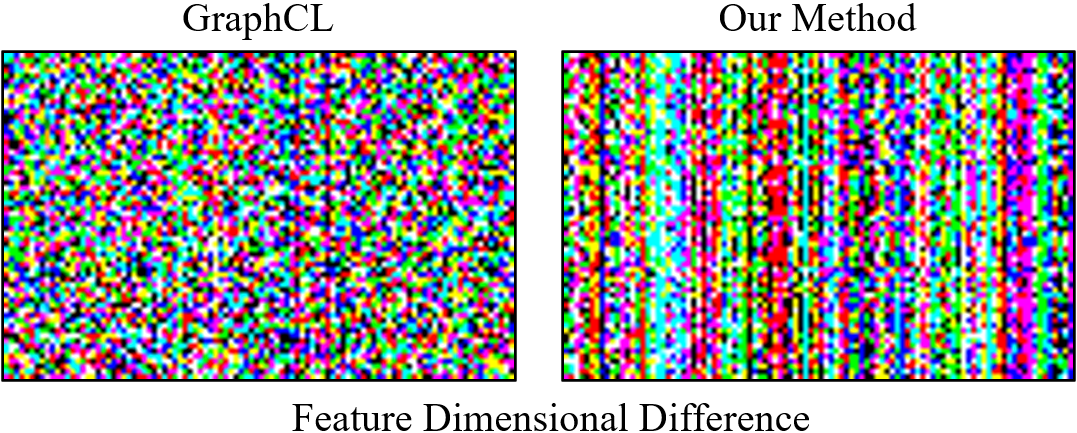}
\caption{The visualization of the representations learned by GraphCL and our method using the redundancy reduction method on the BBBP dataset, respectively. The learned features are projected into a colored image in RGB format, where different colors represent different types of features. The abscissa axis represents the feature dimensions, and the ordinate axis represents samples of different classes. The greater the color contrast, the lower the dimensional feature similarity. These plots represent the similarity between dimension features with the first 64 samples of BBBP.}
\label{fig:dimension-redundancy}
\end{center}
\end{figure}

\begin{equation}\label{eq:hadama}\small
\tilde{\boldsymbol{h}}=\boldsymbol{h} \odot \mathcal{R},
\end{equation}
where $\tilde{\boldsymbol{h}}$ denotes the representation derived by preserving the acquired DR, and $\odot$ represents an element-wise product operation. By utilizing this operation, we obtain $\tilde{\boldsymbol{h}}_{n,i}$ and $\tilde{\boldsymbol{h}}_{n,j}$.
Furthermore, we utilize a projection head $g^{DRIN}_{\vartheta}(\cdot)$ to map the graph representations into a latent space:
\begin{equation}\label{eq:rationale acquire}\small
    \tilde{\boldsymbol{z}}_{n,i} = g^{DRIN}_{\vartheta} (\tilde{\boldsymbol{h}}_{n,i}),  \tilde{\boldsymbol{z}}_{n,j} = g^{DRIN}_{\vartheta}(\tilde{\boldsymbol{h}}_{n,j}).
\end{equation}
Subsequently, we utilize the 
DR-aware InfoNCE loss as 
\begin{equation}\label{eq:drin}\small
\mathcal{L}_{DRIN}=\sum_{n=1}^N -\log \frac{\exp \left(d\left(\tilde{\boldsymbol{z}}_{n, i}, \tilde{\boldsymbol{z}}_{n, j}\right) / \tau\right)}{\sum_{n^{\prime}=1, n^{\prime} \neq n}^N \exp \left(d\left(\tilde{\boldsymbol{z}}_{n, i}, \tilde{\boldsymbol{z}}_{n^{\prime}, j}\right) / \tau\right)}.
\end{equation}

\subsection{Graph Dimensional Rationale Redundancy Reduction}

From the perspective of information theory, each dimension captures a subset of the information entropy of the graph representation. As depicted in Figure \ref{fig:dimension-redundancy}, GraphCL encounters the issue of graph dimensional redundancy, which indicates that multiple dimensions in graph embeddings share overlapping information entropy. To tackle the issue, inspired by classical multivariate analysis methods \cite{hotelling1992relations, ccassg}, we apply the graph DR redundancy reduction to DRGCL. 
Following the aforementioned manner to get $\tilde{\boldsymbol{h}}$, we obtain $\tilde{\mathbf{h}}_i$, $\tilde{\mathbf{h}}_j$, which denote the representations from a minibatch of $N$ graphs of two augmented views $\hat{\mathcal{G}}_i$ and $\hat{\mathcal{G}}_j$. Subsequently, we use a specific projection head $g^{RR}_{\vartheta^\prime}(\cdot)$ to map the graph representations into a latent space:
\begin{equation}\small
    \tilde{\mathbf{z}}_i = g^{RR}_{\vartheta^\prime}(\tilde{\mathbf{h}}_i),  \tilde{\mathbf{z}}_j = g^{RR}_{\vartheta^\prime}(\tilde{\mathbf{h}}_j).
\end{equation}
Then we apply an instance-dimensional normalization to ensure each feature dimension has a $0$-mean and $1/\sqrt{N}$-standard deviation distribution, which is implemented as:
\begin{equation}\small
\bar{\mathbf{z}}=\frac{\tilde{\mathbf{z}}-\mu(\tilde{\mathbf{z}})}{\sigma(\tilde{\mathbf{z}})*\sqrt{N}}.
\end{equation}
The obtained normalized $\bar{\mathbf{z}}_i$ and $\bar{\mathbf{z}}_j$ are further used to form the redundancy reduction loss for a certain graph as
\begin{equation}\small
\mathcal{L}_{RR}=\underbrace{\mathcal{F}( \bar{\mathbf{z}}_i,\bar{\mathbf{z}}_j)}_{\text {invariance term}}+\underbrace{\lambda(\mathcal{F}(\bar{\mathbf{z}}_i^{\top} \bar{\mathbf{z}}_i, \mathbf{I})+\mathcal{F}(\bar{\mathbf{z}}_j^{\top} \bar{\mathbf{z}}_j, \mathbf{I}))}_{\text{decorrelation term}} ,  
\end{equation}
where $\mathcal{F}(\cdot,\cdot)=\left\| \cdot-\cdot \right\|_F^2$, $\left\| \cdot \right\|_F^2$ denotes the Frobenius norm and $\lambda$ is a trade-off hyperparameter. Intuitively, the invariance term makes the embedding invariant to the distortions of a graph by minimizing the difference between two normalized representations. By trying to equate the off-diagonal elements of the auto-correlation matrix of each view's representation to $0$, the decorrelation term reduces the redundancy between the representations, thereby avoiding the collapsed trivial solution outputting the same vector for all inputs.
In Figure \ref{fig:dimension-redundancy}, it can be intuitively observed that adopting the redundancy reduction loss, our DRGCL can indeed learn representations with information-decoupled dimensions.

\begin{algorithm}[t]
\small
 \begin{algorithmic}
  \STATE {\bfseries Input:} Graph dataset $\mathcal{G}_m$ with $M$ graphs, minibatch size $N$, and a hyper-parameter $\alpha$.\\
  \STATE {\bf Initialize} The neural network parameters: $\theta$ for $f_{\theta}(\cdot)$, $\vartheta$ for $g^{DRIN}_{\vartheta}(\cdot)$, $\vartheta^\prime$ for $g^{RR}_{\vartheta^\prime}(\cdot)$, and $\mathcal{R} = \left\{ \boldsymbol{\omega}_k \big| k \in \llbracket {1, D} \rrbracket \right\}$. The learning rates: $\beta_\theta$ and $\beta_\vartheta$, etc.
  \REPEAT
  \FOR{$t$-th training iteration}
  \STATE Iteratively sample a minibatch  $\mathcal{G}^\prime$ with $N$ examples from $\mathcal{G}_m$, $\mathcal{G}^\prime = \{ \mathcal{G}_n: n=1,2,...N \}$
  \STATE Randomly sample two augmentations $t_1$, $t_2$ from $\mathcal{T}$, the augmented views of $\mathcal{G}_n$ can be denoted as $\hat{\mathcal{G}}_{n,i}$ and $\hat{\mathcal{G}}_{n,j}$, the augmented views of $\mathcal{G}^\prime$ can be denoted as $\hat{\mathcal{G}}^{\prime}$, including $\hat{\mathcal{G}}_i$, $\hat{\mathcal{G}}_j$.
  
  \FOR{$n=1$ to $N$}
    \STATE $\tilde{\boldsymbol{h}}_{n,i}=f_\theta(\hat{\mathcal{G}}_{n,i})\odot\mathcal{R}$, \ $\tilde{\boldsymbol{h}}_{n,j}=f_\theta(\hat{\mathcal{G}}_{n,j})\odot\mathcal{R}$
    \STATE $\tilde{\boldsymbol{z}}_{n,i} = g^{DRIN}_{\vartheta}(\tilde{\boldsymbol{h}}_{n,i})$,\ 
    $\tilde{\boldsymbol{z}}_{n,j} = g^{DRIN}_{\vartheta}(\tilde{\boldsymbol{h}}_{n,j})$
  \ENDFOR
  \STATE $\mathcal{L}_{DRIN} = \sum_{n=1}^N -\log \frac{\exp \left(d\left(\tilde{\boldsymbol{z}}_{n, i}, \tilde{\boldsymbol{z}}_{n, j}\right) / \tau\right)}{\sum_{n^{\prime}=1, n^{\prime} \neq n}^N \exp \left(d\left(\tilde{\boldsymbol{z}}_{n, i}, \tilde{\boldsymbol{z}}_{n^{\prime}, j}\right) / \tau\right)}$  
    
  \STATE $\tilde{\mathbf{h}}_i=f_\theta(\hat{\mathcal{G}}_i)\odot\mathcal{R}$,\ 
  $\tilde{\mathbf{h}}_j=f_\theta(\hat{\mathcal{G}}_j)\odot\mathcal{R}$
  \STATE $\tilde{\mathbf{z}}_i = g^{RR}_{\vartheta^\prime}(\tilde{\mathbf{h}}_i)$,\ $\tilde{\mathbf{z}}_j = g^{RR}_{\vartheta^\prime}(\tilde{\mathbf{h}}_j)$
  \STATE $\bar{\mathbf{z}}_i=\frac{\tilde{\mathbf{z}}_i-\mu(\tilde{\mathbf{z}}_i)}{\sigma(\tilde{\mathbf{z}}_i)*\sqrt{N}}$,
  $\bar{\mathbf{z}}_j=\frac{\tilde{\mathbf{z}}_j-\mu(\tilde{\mathbf{z}}_j)}{\sigma(\tilde{\mathbf{z}}_j)*\sqrt{N}}$
  \STATE 
  $\mathcal{L}_{RR}=\underbrace{\mathcal{F}( \bar{\mathbf{z}}_i,\bar{\mathbf{z}}_j)}_{\text {invariance term}}+\underbrace{\lambda(\mathcal{F}(\bar{\mathbf{z}}_i^{\top} \bar{\mathbf{z}}_i, \mathbf{I})+\mathcal{F}(\bar{\mathbf{z}}_j^{\top} \bar{\mathbf{z}}_j, \mathbf{I}))}_{\text{decorrelation term}}$

  \STATE $\# \ regular \ training \ step, \ fix \ \mathcal{R}$ 
  
  \STATE $\mathop{\arg\min}_{\theta, \vartheta, \vartheta^\prime} \mathcal{L}_{RR} + \alpha \cdot \mathcal{L}_{DRIN}$
  \STATE $\# \ compute \ trial \ weights \ and \ retain \ computational$
  \STATE $\# \ graph, fix \ \theta \ and \ \vartheta$
  \STATE $\theta_{trial}=\theta-\beta_\theta \nabla_\theta \mathcal{L}_{DRIN}\left(g_{\vartheta}^{DRIN }\left(f_\theta\left(\hat{\mathcal{G}}^{\prime}\right) \odot \mathcal{R}\right)\right)$, \\
$\vartheta_{trial}=\vartheta-\beta_{\vartheta} \nabla_{\vartheta} \mathcal{L}_{DRIN}\left(g_{\vartheta}^{DRIN}\left(f_\theta\left(\hat{\mathcal{G}}^{\prime}\right) \odot \mathcal{R}\right)\right)$
  \STATE $\# \ meta \ training \ step \ using \ second \ derivative$
  \STATE $\underset{\mathcal{R}}{\arg \min} \mathcal{L}_{DRIN }\left(g^{DRIN}_{\vartheta_{trial}}\left(f_{\theta_{trial }}\left(\hat{\mathcal{G}}^{\prime}\right) \odot \mathcal{R}\right)\right)$ \\
  \ENDFOR
  \UNTIL $\theta$, $\vartheta$, $\vartheta^{RR}$, and $\mathcal{R}$ converge.
 \end{algorithmic}
 \caption{The DRGRL training algorithm}
 \label{alg:DRGRL}
\end{algorithm}

\begin{figure}[t]
\begin{center}
\includegraphics[width=0.32\textwidth]{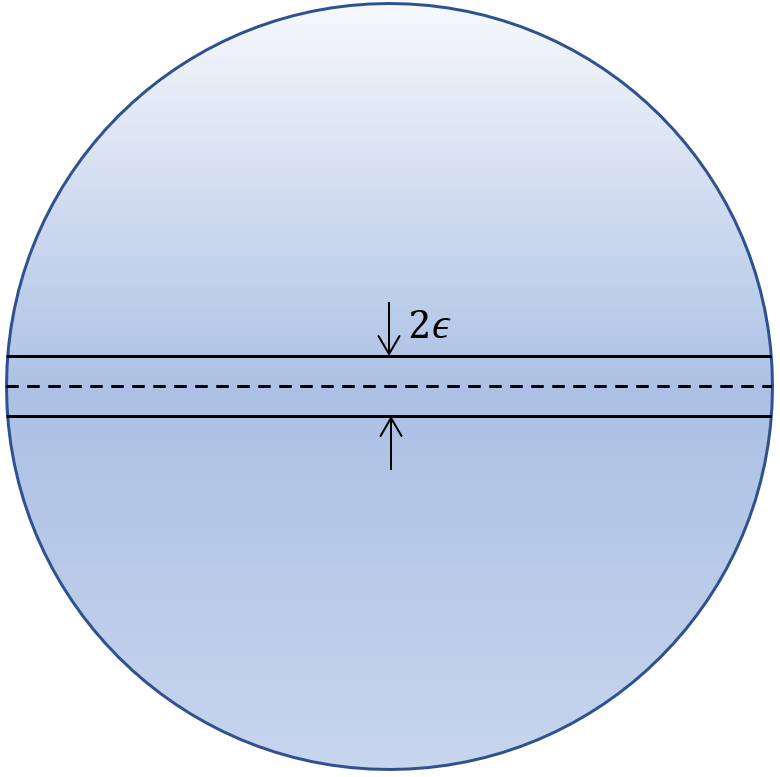}
\caption{A counter-intuitive high-dimensional phenomenon in the problem of measuring concentration on a sphere. Almost the whole area of a high-dimensional sphere is concentrated in an $\epsilon$-strip around its equator and actually around any great circle.}
\label{fig:sphere}
\end{center}
\end{figure}

\begin{table*}[t]
	\begin{center}
		\begin{small}
			\begin{tabular}{c|cccc|cccc|c}
				\hline
                    \hline
				\text{Dataset} & \text{NCI1} & \text{PROTEINS} & \text{DD} & \text{MUTAG} & \text{COLLAB} & \text{RDT-B}
                        & \text{RDT-M5K} & \text{IMDB-B} & A.R. $\downarrow$\\
				\hline
                    \hline
				\text{GL}  & - & - & - & 81.7 $\pm$ 2.1 & -
                        & 77.3 $\pm$ 0.2 & 41.0 $\pm$ 0.2 & 69.9 $\pm$ 1.0 & 11.0\\
                    \text{WL}  & \textbf{80.0} $\pm$ 0.5 & 73.0 $\pm$ 0.6 & - & 80.7 $\pm$ 3.0 & -
                        & 68.8 $\pm$ 0.4 & 46.1 $\pm$ 0.2 & \textbf{72.3} $\pm$ 3.4 & 8.3\\
				\text{DGK}  & \textbf{80.3} $\pm$ 0.5 & 73.3 $\pm$ 0.8 & - & 87.4 $\pm$ 0.7 & -
                        & 78.0 $\pm$ 0.4 & 41.3 $\pm$ 0.2 & 67.0 $\pm$ 0.6 & 8.0\\
                    \hline
                    \text{node2vec}  & 54.9 $\pm$ 1.6 & 57.5 $\pm$ 3.6 & - & 72.6 $\pm$ 10.0 & -
                        & - & - & - & 12.3\\
                    \text{sub2vec}  & 52.8 $\pm$ 1.5 & 53.0 $\pm$ 5.6 & - & 61.1 $\pm$ 15.8 & -
                        & 71.5 $\pm$ 0.4 & 36.7 $\pm$ 0.4 & 55.3 $\pm$ 1.5 & 13.0\\
                    \text{graph2vec}  & 73.2 $\pm$ 1.8 & 73.3 $\pm$ 2.0 & - & 83.2 $\pm$ 9.3 & -
                        & 75.8 $\pm$ 1.0 & 47.9 $\pm$ 0.3 & 71.1 $\pm$ 0.5 & 9.3\\
                    \text{InfoGraph}  & 76.2 $\pm$ 1.0 & 74.4 $\pm$ 0.3 & 72.9 $\pm$ 1.8 & \textbf{89.0} $\pm$ 1.1 & 70.7 $\pm$ 1.1
                        & 82.5 $\pm$ 1.4 & 53.5 $\pm$ 1.0 & \textbf{73.0} $\pm$ 0.9 & 5.8\\
                    \text{GraphCL}  & 77.9 $\pm$ 0.4 & 74.4 $\pm$ 0.5 & \textbf{78.6} $\pm$ 0.4 & 86.8 $\pm$ 1.3 & \textbf{71.4} $\pm$ 1.2
                        & 89.5 $\pm$ 0.8 & \textbf{56.0} $\pm$ 0.3 & 71.2 $\pm$ 0.4 & 5.0\\
                    \text{ADGCL}  & 73.9 $\pm$ 0.8 & 73.3 $\pm$ 0.5 & 75.8 $\pm$ 0.9 & 88.7 $\pm$ 1.9 & \textbf{72.0} $\pm$ 0.6
                        & \textbf{90.1} $\pm$ 0.9 & 54.3 $\pm$ 0.3 & 70.2 $\pm$ 0.7 & 6.1\\
                    \text{JOAO}  & 78.1 $\pm$ 0.5 & 74.6 $\pm$ 0.4 & 77.3 $\pm$ 0.5 & 87.4 $\pm$ 1.0 & 69.5 $\pm$ 0.4
                        & 85.3 $\pm$ 1.4 & 55.7 $\pm$ 0.6 & 70.2 $\pm$ 3.1 & 6.5\\
                    \text{JOAOv2}  & 78.4 $\pm$ 0.5 & 74.1 $\pm$ 1.1 & 77.4 $\pm$ 1.2 & 87.7 $\pm$ 0.8 & 69.3 $\pm$ 0.3
                        & 86.4 $\pm$ 1.5 & \textbf{56.0} $\pm$ 0.3 & 70.8 $\pm$ 0.3 & 5.8\\
                    \text{RGCL}  & 78.1 $\pm$ 1.0 & \textbf{75.0} $\pm$ 0.4 & \textbf{78.9} $\pm$ 0.5 & 87.7 $\pm$ 1.0 & 71.0 $\pm$ 0.7
                        & \textbf{90.3} $\pm$ 0.6 & \textbf{56.4} $\pm$ 0.4 & 71.9 $\pm$ 0.9 & \textbf{3.3}\\
                    \text{SimGRACE}  & \textbf{79.1} $\pm$ 0.4 & \textbf{75.3} $\pm$ 0.1 & 77.4 $\pm$ 1.1 & \textbf{89.0} $\pm$ 1.3 & \textbf{71.7} $\pm$ 0.8
                        & 89.5 $\pm$ 0.9 & 55.9 $\pm$ 0.3 & 71.3 $\pm$ 0.8 & \textbf{3.3}\\
				\hline
                    \textbf{\text{DRGCL}} & 78.7 $\pm$ 0.4 & \textbf{75.2} $\pm$ 0.6 & \textbf{78.4} $\pm$ 0.7 & \textbf{89.5} $\pm$ 0.6 & 70.6 $\pm$ 0.8 & \textbf{90.8} $\pm$ 0.3 & \textbf{56.3} $\pm$ 0.2 & \textbf{72.0} $\pm$ 0.5 & \textbf{2.8} \\
                    \hline
                    \hline
			\end{tabular}
		\end{small}
	\end{center}
 \caption{Unsupervised representation learning classification accuracy (\%) on TU datasets (mean 10-fold cross-validation accuracy with 5 runs). A.R denotes the average rank of the results. The top-3 results are highlighted in \textbf{bold}.}
	\label{tab:unsupervised learning}
\end{table*}

\begin{table*}[t]
	\begin{center}
		\begin{small}
			\begin{tabular}{c|cccccccc|c}
				\hline
                    \hline
				\text{Dataset} & \text{BBBP} & \text{Tox21} & \text{ToxCast} & \text{SIDER} & \text{ClinTox} & \text{MUV}
                        & \text{HIV} & \text{BACE} & AVG.\\
				\hline
                    \hline
				\text{No Pre-Train}  & 65.8 $\pm$ 4.5 & 74.0 $\pm$ 0.8 & 63.4 $\pm$ 0.6 & 57.3 $\pm$ 1.6 & 58.0 $\pm$ 4.4
                        & 71.8 $\pm$ 2.5 & 75.3 $\pm$ 1.9 & 70.1 $\pm$ 5.4 & 67.0\\
				\hline
                    \text{AttrMasking}  & 64.3 $\pm$ 2.8 & \textbf{76.7} $\pm$ 0.4 & \textbf{64.2} $\pm$ 0.5 & \textbf{61.0} $\pm$ 0.7 & 71.8 $\pm$ 4.1
                        & \textbf{74.7} $\pm$ 1.4 & 77.2 $\pm$ 1.1 & \textbf{79.3} $\pm$ 1.6 & 71.1\\
                    \text{ContextPred}  & 68.0 $\pm$ 2.0 & \textbf{75.7} $\pm$ 0.7 & \textbf{63.9} $\pm$ 0.6 & 60.9 $\pm$ 0.6 & 65.9 $\pm$ 3.8
                        & \textbf{75.8} $\pm$ 1.7 & 77.3 $\pm$ 1.0 & \textbf{79.6} $\pm$ 1.2 & 70.9\\
                    \text{GraphCL}  & 69.7 $\pm$ 0.7 & 73.9 $\pm$ 0.7 & 62.4 $\pm$ 0.6 & 60.5 $\pm$ 0.9 & 76.0 $\pm$ 2.7
                        & 69.8 $\pm$ 2.7 & \textbf{78.5} $\pm$ 1.2 & 75.4 $\pm$ 1.4 & 70.8\\
                    \text{ADGCL}  & 68.3 $\pm$ 1.0 & 73.6 $\pm$ 0.8 & 63.1 $\pm$ 0.7 & 59.2 $\pm$ 0.9 & 77.6 $\pm$ 4.2
                        & \textbf{74.9} $\pm$ 2.5 & 75.5 $\pm$ 1.3 & 75.0 $\pm$ 1.9 & 70.9 \\
                    \text{JOAO} & 70.2 $\pm$ 1.0 & 75.0 $\pm$ 0.3 & 63.0 $\pm$ 0.5 & 60.0 $\pm$ 0.8 & \textbf{81.3} $\pm$ 2.5
                        & 71.7 $\pm$ 1.4 & 76.7 $\pm$ 1.2 & 77.3 $\pm$ 0.5 & 71.9 \\
                    \text{JOAOv2}  & \textbf{71.4} $\pm$ 0.9 & 74.2 $\pm$ 0.6 & 63.2 $\pm$ 0.5 & 60.5 $\pm$ 0.7 & \textbf{81.0} $\pm$ 1.6
                        & 73.7 $\pm$ 1.0 & 77.5 $\pm$ 1.2 & 75.5 $\pm$ 1.3 & \textbf{72.1}\\
                    RGCL$^{\ddag}$ & \textbf{71.4} $\pm$ 0.7 & \textbf{75.2} $\pm$ 0.3 & 63.3 $\pm$ 0.2 & \textbf{61.4} $\pm$ 0.6 & 76.4 $\pm$ 3.4
                        & 72.6 $\pm$ 1.5 & \textbf{77.9} $\pm$ 0.8 & 76.0 $\pm$ 0.8 & \textbf{71.8}\\
                    SimGRACE$^{\ddag}$  & \textbf{71.3} $\pm$ 0.9 & 73.9 $\pm$ 0.4 & 63.4 $\pm$ 0.5 & 60.6 $\pm$ 1.0 & 64.0 $\pm$ 1.2
                        & 69.4 $\pm$ 1.2 & 75.0 $\pm$ 1.1 & 74.6 $\pm$ 0.7 & 69.0 \\
				\hline
                    \textbf{\text{DRGCL}}  & \textbf{71.2} $\pm$ 0.5 & 74.7 $\pm$ 0.5 & \textbf{64.0} $\pm$ 0.5 & \textbf{61.1} $\pm$ 0.8 & \textbf{78.2 $\pm$ 1.5} & 73.8 $\pm$ 1.1 & \textbf{78.6} $\pm$ 1.0 & \textbf{78.2} $\pm$ 1.0 & \textbf{72.5}\\
                    \hline
                    \hline
			\end{tabular}
		\end{small}
	\end{center}
        \caption{Transfer learning performance on molecular property prediction in ZINC-2M (mean ROC-AUC + std over 10 runs). AVG. denotes
    the average result in all datasets.
    $\ddag$ means there exist differences in producing the results. RGCL  finetunes ClinTox for 300 epochs and MUV for 50 epochs. For fairness, we reproduce them by finetuning for 100 epochs. SimGRACE only provides the results for BBBP, ToxCast, and SIDER. We provide the results of SimGRACE on other datasets in benchmarks.
        }
	\label{tab:transfer learning}
\end{table*}

\subsection{Dimensional Rationale-aware Graph Contrastive Learning with Backdoor Adjustment} \label{sec:drgclba}

During pre-training, a conventional training paradigm and a meta-learning training paradigm are iteratively employed. Specifically, we train the encoder $f_\theta(\cdot)$, and the projection heads $g^{DRIN}_{\vartheta}(\cdot)$ and $g^{RR}_{\vartheta^\prime}(\cdot)$ in a conventional manner, while the DR weight $\mathcal{R}$ is trained by adopting the meta-learning objective. The overall training procedure of DRGCL consists of two steps.
In the first training step, we follow the standard contrastive learning approach to train $f_\theta(\cdot)$, $g^{DRIN}_{\vartheta}(\cdot)$, and $g^{RR}_{\vartheta^\prime}(\cdot)$. This involves jointly minimizing the contrastive loss and the redundancy reduction loss:
\begin{equation}\small
    \mathcal{L}_{DRGCL} = \mathcal{L}_{RR} + \alpha \cdot \mathcal{L}_{DRIN},
\end{equation}
where $\alpha$ is a hyper-parameter that governs the trade-off between the two loss components. 
The second training step is based on meta-learning. We use a second-derivative technique \cite{sslmeta} to solve a bi-level optimization problem. 
We encourage $\mathcal{R}$ to re-weight the specific dimensions to preserve task-relevant information, which is regarded as the DR for graph embeddings, so that DRGCL can perform the causal intervention via backdoor adjustment during training. 
Specifically, $\mathcal{R}$ is updated by computing its gradients with respect to the performance of $f_\theta(\cdot)$ and $g^{DRIN}_\vartheta(\cdot)$. The corresponding performance is measured by using the gradients of $f_\theta(\cdot)$ and $g^{DRIN}_\vartheta(\cdot)$ during the back-propagation of graph contrastive loss. Based on this updating mechanism during pre-training, the iterations of $\mathcal{R}$ can include sufficient values to perform the backdoor adjustment conditional on $R$ with respect to $E$ and $Y$. Formally, we update the DR weight $\mathcal{R}$ by
\begin{equation} \label{eq:all loss}\small
\underset{\mathcal{R}}{\arg \min} \mathcal{L}_{DRIN }\left(g^{DRIN}_{\vartheta_{trial}}\left(f_{\theta_{trial }}\left(\hat{\mathcal{G}}^{\prime}\right) \odot \mathcal{R}\right)\right),
\end{equation}
where $\hat{\mathcal{G}}^{\prime}$, including $\hat{\mathcal{G}}_i$, $\hat{\mathcal{G}}_j$, denotes the augmented views of $\mathcal{G}^\prime$, and $\mathcal{G}^\prime$ is sampled from the graph dataset $\mathcal{G}_m$. $\theta_{trial}$ and $\vartheta_{trial}$ denote the $trial$ weights of the encoders and projection heads, respectively, after one gradient update using the contrastive loss defined in Equation \ref{eq:drin}. The update of these trial weights is formulated as follows:
\begin{equation}\small
\begin{aligned}
\theta_{trial}&=\theta-\beta_\theta \nabla_\theta \mathcal{L}_{DRIN}\left(g_{\vartheta}^{DRIN }\left(f_\theta\left(\hat{\mathcal{G}}^{\prime}\right) \odot \mathcal{R}\right)\right), \\
\vartheta_{trial}&=\vartheta-\beta_{\vartheta} \nabla_{\vartheta} \mathcal{L}_{DRIN}\left(g_{\vartheta}^{DRIN}\left(f_\theta\left(\hat{\mathcal{G}}^{\prime}\right) \odot \mathcal{R}\right)\right),
\end{aligned}
\end{equation}
where $\beta_\theta$ and $\beta_\vartheta$ are learning rates. The intuition behind such a behavior is to leverage the second-derivative trick, which involves computing a derivative over the derivative of the combination ${ \theta,\vartheta }$ in order to update $\mathcal{R}$. Specifically, we compute the derivative with respect to $\mathcal{R}$ using a retained computational graph of ${ \theta,\vartheta }$ and then update the DR weight $\mathcal{R}$ by back-propagating this derivative as defined in Equation \ref{eq:all loss}.
Intuitively, the initialization of $\mathcal{R}$ is biased. During pre-training, $\mathcal{R}$ is updated per batch over epochs, resulting in the acquirement of local $\mathcal{R}$ with sufficient self-supervision for the current batch. After pre-training, all graphs have gradient contributions to $\mathcal{R}$, thereby achieving the global DR. 
The two steps for updating $f_\theta(\cdot)$, $g^{DRIN}_\vartheta(\cdot)$, $g^{RR}_{\vartheta^\prime}(\cdot)$ and updating $\mathcal{R}$ are iteratively imposed until convergence. The Algorithm of the training pipeline is detailed in Algorithm  \ref{alg:DRGRL}.

For the fitting on downstream tasks, we utilize the graph DR-aware embeddings for downstream tasks. 

\begin{table*}[t]
    \begin{center}
        \begin{small}
            \begin{tabular}{c|ccccccc|c}
            \hline\hline
                Dataset & No PreTrain & AttrMasking & ContextPred & GraphCL & JOAO & JOAOv2 & SimGRACE & \textbf{DRGCL} \\
                \hline
                PPI-306K & 64.8 $\pm$ 2.0 & 65.2 $\pm$ 1.6 & 64.4 $\pm$ 1.3 & \textbf{67.9} $\pm$ 0.9 & 64.4 $\pm$ 1.4 & 63.9 $\pm$ 1.6 & \textbf{70.3} $\pm$ 1.2 & \textbf{69.4} $\pm$ 0.4 \\
            \hline\hline
            \end{tabular}
        \end{small}
    \end{center}
    \caption{Transfer leaning performance on protein function prediction in biology PPI-306K dataset. The top-3 results are highlighted in \textbf{bold}.}
    \label{tab:transfer learning on bio dataset}
\end{table*}

\begin{table*}[t]
	\begin{center}
		\begin{small}
			\begin{tabular}{c|cccccccc|c}
				\hline
                    \hline
				\text{Dataset} & \text{BBBP} & \text{Tox21} & \text{ToxCast} & \text{SIDER} & \text{ClinTox} & \text{MUV}
                        & \text{HIV} & \text{BACE} & AVG.\\
				\hline
                \hline
			     \text{w/o RR \& DR} & 69.7 $\pm$ 0.7 & 73.9 $\pm$ 0.7 & 62.4 $\pm$ 0.6 & 60.5 $\pm$ 0.9 & 76.0 $\pm$ 2.7
                        & 69.8 $\pm$ 2.7 & 78.5 $\pm$ 1.2 & 75.4 $\pm$ 1.4 & 70.8\\      
                    \text{w/o RR}  & 69.7 $\pm$ 0.7 & 74.7 $\pm$ 0.4 & 63.6 $\pm$ 0.5 & 59.9 $\pm$ 0.4  & 75.6 $\pm$ 3.5
                        & 72.2 $\pm$ 1.6 & 76.9 $\pm$ 0.8 & 75.1 $\pm$ 0.8 & 71.0 \\
                    \text{w/o DR}  & 70.6 $\pm$ 0.8 & 74.3 $\pm$
                     0.5 & 63.8 $\pm$ 0.5 & 60.3 $\pm$ 0.5 & 77.4 $\pm$ 1.4
                        & 73.8 $\pm$ 1.1 & 78.3 $\pm$ 1.0 & 76.8 $\pm$ 0.9 & 71.9\\
                    \hline
                    \textbf{\text{DRGCL}}  & 71.2 $\pm$ 0.5 & 74.7 $\pm$ 0.5 & 64.0 $\pm$ 0.5 & 61.1 $\pm$ 0.8 & 78.2 $\pm$ 1.5 & 73.8 $\pm$ 1.1 & 78.6 $\pm$ 1.0 & 78.2 $\pm$ 1.0 & 72.5\\
                    \hline
                    \hline
			\end{tabular}
		\end{small}
	\end{center}
        \caption{Ablation study for DRGCL on downstream transfer learning.}
	\label{tab:transfer learning ablation}
\end{table*}

\section{Theoretical Analyses}

\subsection{Discussion on Relation between SR and DR}
To facilitate comprehension, we recap the necessary preliminaries of GNN as follows. Suppose that $G=(\mathcal{V}, \mathcal{E})$ is a graph instance with the edge set $\mathcal{E}$ and the node set $\mathcal{V}$. The unified GNN framework follows a neighborhood aggregation strategy, where the representation of a node is iteratively updated by aggregating representations of its neighbors \cite{gin}.
After undergoing k iterations of aggregation, the representation of a node effectively captures the structural information present within its k-hop network neighborhood.
Formally, the $k$-th layer of a GNN is 
\begin{equation}\label{eq:node-repre}
\small
\begin{aligned}
\boldsymbol{a}_v^{(k)} &=\operatorname{AGGREGATE}^{(k)}\left(\left\{\boldsymbol{h}_u^{(k-1)}: u \in \mathcal{N}(v)\right\}\right), \\
\quad \boldsymbol{h}_v^{(k)}&=\operatorname{COMBINE}^{(k)}\left(\boldsymbol{h}_v^{(k-1)}, \boldsymbol{a}_v^{(k)}\right),
\end{aligned}
\end{equation}
where 
$\mathcal{N}(v)$ is a set of nodes adjacent to $v$,
$\boldsymbol{a}_v^{(k)}$ is an aggregating representations of $v$'s neighbors,
$\boldsymbol{h}_v^{(k)}$ 
is the feature vector of node $v$ at the $k$-th layer. 
For graph classification, the READOUT function aggregates node features from the final iteration to obtain the entire graph’s representation $\boldsymbol{h}$:
\begin{equation}\label{eq:readout}
\small
\boldsymbol{h}=\operatorname{READOUT}\left(\left\{\boldsymbol{h}_v^{(k)} \mid v \in G\right\}\right),
\end{equation}
where READOUT can be a simple permutation invariant function such as summation or a more sophisticated
graph-level pooling function.

Our DR applies a dimensional weight to the graph representation while the SR concentrates the rationale in message passing or node representation.
Suppose the AGGREGATE, COMBINE, and READOUT functions are injective, then obviously the change of nodes is a degeneration or special solution of the graph. For ease of discussion, performing attribute masking on node embeddings is equivalent to setting the weight of corresponding dimensions to zero in the graph embeddings.
In addition, as the dimensionality of the representation increases, the representational space of the DR method is expansible. In contrast, SR, being a degenerate form of DR, exhibits a fixed representation space owing to its dependence on the representation space of the underlying graph.
Thereby, DR methods can contain more information entropy, which helps the model to acquire more fine-grained and intrinsic rationales of graphs.

\subsection{Theoretical Feasibility of the Innate Mechanism of the DR}
According to \cite{wright2022high}, high-dimensional problems can be solved with low dimensions. To understand this, we can obtain inspiration from Figure \ref{fig:sphere}, which is a counter-intuitive high-dimensional phenomenon in the problem of measuring concentration on a sphere \cite{DBLP:books/daglib/0018467}. 
Figure \ref{fig:sphere} depicts an $\epsilon$-strip surrounding the equatorial great circle of the sphere $\mathbb{S}^{n-1}$ in $\mathbb{R}^n$. In this case, the great circle corresponds to the equator, where $x_n = 0$. To ensure that the strip covers a significant portion, let's say 99\% of the sphere's area, we have:
\begin{equation}\small
    Area \{ \boldsymbol{x} \in \mathbb{S}^{n-1}: -\epsilon \leq x_n \leq \epsilon \} = 0.99 \cdot Area \left( \mathbb{S}^{n-1} \right).
\end{equation}
Empirical evidence from low-dimensional spheres suggests that a large value of $\epsilon$ is necessary. However, a straightforward calculation reveals that as the dimension $n$ increases, $\epsilon$ decreases on the order of $n^{-1/2}$. Consequently, as $n$ becomes large, the width of the strip $2\epsilon$ can become arbitrarily small. Consequently, as illustrated in Figure \ref{fig:sphere}, the majority of the sphere's area concentrates around the equator.

By the same token, obtaining discriminative information from high-dimensional graph embeddings can be solved with low dimensions. The process of obtaining the DR can be regarded as detecting the point distribution of the sphere. In extreme cases, only a few dimensions of graph embeddings contribute to the downstream task, i.e., many dimensional weights are approaching 0. Then, our graph representation problem can be equivalent to the problem of measuring concentration on a sphere in Figure \ref{fig:sphere}.

\begin{table*}[t]
	\begin{center}
		\begin{small}
			\begin{tabular}{c|cccccc|c}
				\hline
                    \hline
				\text{Fixed $\mathcal{R}$} & \text{BBBP} & \text{Tox21} & \text{ToxCast} & \text{SIDER} & \text{ClinTox} & \text{BACE} & AVG.\\
				\hline
                \hline
			     \text{0.1} & 69.2 $\pm$ 1.0 & 75.3 $\pm$ 0.3 & 63.3 $\pm$ 0.3 & 60.6 $\pm$ 0.7 & 79.1 $\pm$ 1.4
                        & 75.3 $\pm$ 1.2 & 70.5 \\      
                    \text{0.3}  & 69.7 $\pm$ 0.9 & 74.1 $\pm$ 0.4 & 63.7 $\pm$ 0.4 & 61.6 $\pm$ 0.7 & 80.8 $\pm$ 1.2
                        & 76.3 $\pm$ 0.9 & 71.0 \\
                    \text{0.7}  & 69.7 $\pm$ 0.5 & 75.0 $\pm$
                     0.4 & 63.7 $\pm$ 0.5 & 60.0 $\pm$ 0.3 & 79.0 $\pm$ 1.9
                        & 73.7 $\pm$ 1.1 & 70.2\\
                    \text{1.0} & 70.6 $\pm$ 0.8 & 74.3 $\pm$
                     0.5 & 63.8 $\pm$ 0.5 & 60.3 $\pm$ 0.5 & 77.4 $\pm$ 1.4
                        & 76.8 $\pm$ 0.9 & 70.5\\
                    \hline
                    \textbf{\text{DRGCL}}  & 71.2 $\pm$ 0.5 & 74.7 $\pm$ 0.5 & 64.0 $\pm$ 0.5 & 61.1 $\pm$ 0.8 & 78.2 $\pm$ 1.5 & 
                    78.2 $\pm$ 1.0 & 71.2 \\
                    \hline
                    \hline
			\end{tabular}
		\end{small}
	\end{center}
        \caption{Transfer learning in ZNIC-2M with different fixed $\mathcal{R}$.}
	\label{tab:transfer learning with different R}
\end{table*}

\subsection{Guarantees for DRGCL's Effectiveness}

Motivated by \cite{chaos,metamask}, we provide two Theorems as guarantees for DRGCL's effectiveness in the field of self-supervised graph representation learning.
Theorem \ref{the:connect} states that reducing the risk of GCL loss can improve the performance on downstream tasks, supporting our intuition to make the model focus on the acquisition of discriminative information by learning a DR weight.
Theorem \ref{the:tighter} states that given the label $\boldsymbol{y}$, the DR-aware representation $\tilde{\boldsymbol{z}}$ has smaller conditional variance than $\boldsymbol{z}$ in conventional GCL. 
Two Theorems are formulated as follows:

\begin{theorem}
	(Connecting Graph DR-aware Representations to Downstream Cross-Entropy Loss). Under the minimal assumption of GCL, i.e., the graph contrastive label is invariant to the distributions, when $\mathcal{R}$ is optimal, for any $\tilde{\boldsymbol{z}} \in \mathbb{R}$, the cross-entropy loss $\mathcal{L}_{CE}^\mu \left(\tilde{\boldsymbol{z}}\right)$ for downstream classification can be bounded by $\mathcal{L}_{DRIN}\left(\tilde{\boldsymbol{z}}\right)$:
    \small	
    \begin{equation}
		\begin{aligned}			&\mathcal{L}_{DRIN}\left(\tilde{\boldsymbol{z}}\right) - \sqrt{\psi\left(\tilde{\boldsymbol{z}} \Big| \boldsymbol{y}\right)} - \frac{1}{2} \psi\left(\tilde{\boldsymbol{z}} \Big| \boldsymbol{y}\right) - Err \\ &\leq \mathcal{L}_{CE}^\mu \left(\tilde{\boldsymbol{z}}\right) + \log \left(M/D\right) 
            \leq 	\mathcal{L}_{DRIN}\left(\tilde{\boldsymbol{z}} \right) + \sqrt{\psi\left(\tilde{\boldsymbol{z}} \Big| \boldsymbol{y}\right)} + Err,
		\end{aligned}
    \end{equation}
where $M$ is negative samples' quantity, 
$D$ denotes the representation's dimensionality, 
$\tilde{\boldsymbol{z}}$ is the DR-aware representation, $\boldsymbol{y}$ is the \textit{target} label, $\small \psi \left( \tilde{\boldsymbol{z}} \Big| \boldsymbol{y}  \right) $ is the conditional feature variance, and $\small Err = \mathcal{O} \left( M^{-1/2} \right)$ is the approximation error's order.
	\label{the:connect}
\end{theorem}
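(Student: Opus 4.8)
The plan is to adapt the chain of inequalities of \cite{chaos} (also used by \cite{metamask}), which couples the InfoNCE objective with the mean-classifier cross-entropy loss, to the dimensional-rationale-reweighted representation $\tilde{\boldsymbol{z}} = g^{DRIN}_\vartheta(\boldsymbol{h}\odot\mathcal{R})$, while carefully tracking how the Hadamard reweighting and the optimality of $\mathcal{R}$ enter the bound. First I would fix the downstream notation: let $\boldsymbol{\mu}_c = \mathbb{E}[\tilde{\boldsymbol{z}}\mid \boldsymbol{y}=c]$ be the class-conditional prototype of the DR-aware feature, let $W$ be the linear classifier whose $c$-th row is $\boldsymbol{\mu}_c$, and let $\mathcal{L}_{CE}^\mu(\tilde{\boldsymbol{z}})$ be the cross-entropy loss of this mean classifier. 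The conditional feature variance $\psi(\tilde{\boldsymbol{z}}\mid\boldsymbol{y}) = \mathbb{E}_{\boldsymbol{y}}\,\mathbb{E}[\,\|\tilde{\boldsymbol{z}}-\boldsymbol{\mu}_{\boldsymbol{y}}\|^2 \mid \boldsymbol{y}\,]$ then quantifies how far an individual sample sits from its prototype, and is exactly the quantity shown to shrink under the DR reweighting in Theorem~\ref{the:tighter}.

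\textbf{Step 1 (replace samples by prototypes in the InfoNCE).} I would write $\mathcal{L}_{DRIN}$ as an alignment term $-\mathbb{E}[d(\tilde{\boldsymbol{z}}_{n,i},\tilde{\boldsymbol{z}}_{n,j})/\tau]$ plus a log-partition term $\mathbb{E}\log\sum_{n'}\exp(d(\tilde{\boldsymbol{z}}_{n,i},\tilde{\boldsymbol{z}}_{n',j})/\tau)$, and substitute $\tilde{\boldsymbol{z}}_{n',j}\to\boldsymbol{\mu}_{\boldsymbol{y}_{n'}}$ and $\tilde{\boldsymbol{z}}_{n,i}\to\boldsymbol{\mu}_{\boldsymbol{y}_n}$. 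Using Lipschitzness of $d(\cdot,\cdot)$ on the unit sphere together with Jensen and Cauchy--Schwarz, the first-order error incurred is controlled by $\sqrt{\psi(\tilde{\boldsymbol{z}}\mid\boldsymbol{y})}$, while a second-order Taylor expansion of the log-sum-exp contributes the additional $\tfrac12\psi(\tilde{\boldsymbol{z}}\mid\boldsymbol{y})$ term that appears only on the lower side of the sandwich. This is precisely where the minimal assumption — the contrastive label is invariant to the augmentation distribution — is used, so that the two views $\tilde{\boldsymbol{z}}_{n,i}$ and $\tilde{\boldsymbol{z}}_{n,j}$ share the same prototype $\boldsymbol{\mu}_{\boldsymbol{y}_n}$.

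\textbf{Step 2 (prototype-InfoNCE equals mean-classifier CE up to $\log(M/D)$) and error control.} After the substitution the partition sum runs over the $M$ negative prototypes; grouping negatives by class and invoking the concentration-on-the-sphere phenomenon of Figure~\ref{fig:sphere} — at most $\Theta(D)$ prototypes can be mutually well-separated in $\mathbb{R}^D$ — collapses the $M$-term sum onto the $C$-term softmax of the mean classifier, and the normalization bookkeeping yields the additive constant $\log(M/D)$, giving $\mathcal{L}_{CE}^\mu(\tilde{\boldsymbol{z}})+\log(M/D)$. Optimality of $\mathcal{R}$ is then used to argue that the Hadamard reweighting introduces no extra positive slack: any $\mathcal{R}$ inflating the within-class spread would be strictly suboptimal for $\mathcal{L}_{DRIN}$, so the alignment term is not worsened. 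Finally, the empirical partition function over $M$ i.i.d.\ negatives concentrates around its population counterpart at rate $\mathcal{O}(M^{-1/2})$ by a bounded-difference / Berry--Esseen argument, which is absorbed into $Err$. Collecting the first-order term $\sqrt{\psi}$, the one-sided second-order term $\tfrac12\psi$, the constant $\log(M/D)$, and the $\mathcal{O}(M^{-1/2})$ sampling error produces the stated two-sided bound.

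The main obstacle I expect is Step~2: making the collapse of the $M$-negative partition sum onto the $C$-class softmax rigorous, and in particular pinning down the constant as $\log(M/D)$ rather than $\log(M/C)$ — this requires the geometric input that the effective number of separable prototypes in $\mathbb{R}^D$ is $\Theta(D)$, i.e.\ tying the combinatorial counting of negatives to the dimensionality via the same sphere-concentration estimate used elsewhere in the paper. By contrast, the treatment of the $\odot\,\mathcal{R}$ operation and of the second-derivative-trained optimality of $\mathcal{R}$ is comparatively routine once the variance accounting of Step~1 is in place.
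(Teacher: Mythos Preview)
The paper does not actually supply a proof of Theorem~\ref{the:connect}. It states the theorem, attributes the underlying inequalities to \cite{chaos} and \cite{metamask}, and then the only \texttt{proof} environment in that section establishes the inequality in Equation~\eqref{eq:tight var} of Theorem~\ref{the:tighter}, not Theorem~\ref{the:connect}. So there is no in-paper argument to compare your proposal against; the paper simply imports the bound from the cited references and rebrands it with $\tilde{\boldsymbol z}$ and $\mathcal{L}_{DRIN}$ in place of the generic contrastive feature and InfoNCE loss.

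Given that, your Step~1 is faithful to the \cite{chaos} argument the paper is implicitly invoking: swapping positives/negatives for class prototypes and bounding the discrepancy by $\sqrt{\psi}$ (Jensen/Cauchy--Schwarz) with a one-sided $\tfrac12\psi$ from the log-sum-exp expansion is exactly how that reference proceeds, and the label-invariance assumption enters where you say it does. Your use of the optimality of $\mathcal{R}$ is also consistent with how the paper deploys it elsewhere (namely, to guarantee that the reweighting does not enlarge the conditional variance, cf.\ Theorem~\ref{the:tighter}), so nothing extra is needed there.

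Where you over-engineer is Step~2. You try to \emph{derive} the constant $\log(M/D)$ via the sphere-concentration picture of Figure~\ref{fig:sphere}, arguing that at most $\Theta(D)$ prototypes can be well separated. The paper makes no such geometric reduction for this theorem; the $\log(M/D)$ term is inherited verbatim from the statement in \cite{chaos,metamask} (where the counting comes from the number of negatives versus the effective number of distinguishable directions/classes), not from a fresh packing argument on $\mathbb{S}^{D-1}$. Your proposed route would be genuinely new content relative to the paper and is also the shakiest part of your plan --- the claimed equivalence between ``number of separable prototypes'' and the ambient dimension $D$ is heuristic, and you correctly flag it as the main obstacle. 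If your goal is to match the paper, you can simply cite \cite{chaos} for the prototype-InfoNCE $\leftrightarrow$ mean-classifier CE identity with its accompanying $\log(M/D)$ offset and $\mathcal{O}(M^{-1/2})$ concentration error, rather than rederiving them.
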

\begin{theorem}
	(Guarantees for Reduced Conditional Variance of Graph DR-aware Representations). When $\mathcal{R}$ is optimal, for any coupled $\boldsymbol{z}, \tilde{\boldsymbol{z}} \in \mathbb{R}$, given label $\boldsymbol{y}$, the conditional variance of $\tilde{\boldsymbol{z}}$ is reduced:
	\label{the:tighter}
	\begin{equation}\label{eq:tight var}\small   
		\psi\left(\tilde{\boldsymbol{z}} \Big| \boldsymbol{y}\right) \leq \psi\left(\boldsymbol{z} \Big| \boldsymbol{y}\right), \ yet \ \ 
        \psi\left(\left(\tilde{\boldsymbol{z}}\right)^k \Big| \boldsymbol{y}\right) \cong \psi\left(\left(\boldsymbol{z}\right)^k \Big| \boldsymbol{y}\right),
	\end{equation}
    where $\left( \cdot \right)^k$ is a function acquiring $k$-th dimension vector.
\end{theorem}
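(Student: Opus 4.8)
The plan is to reduce the claim to an elementary second-moment identity that is available because the dimensional rationale acts \emph{coordinate-wise} on the representation, and then to use the optimality of $\mathcal{R}$ only to control where the weights $\omega_k$ land. First I would fix the object: following the cited analyses (\cite{chaos,metamask}), I take the conditional feature variance to be the expected trace of the class-conditional covariance, $\psi(\boldsymbol{z}\mid\boldsymbol{y})=\mathbb{E}_{\boldsymbol{y}}\big[\operatorname{tr}\operatorname{Cov}(\boldsymbol{z}\mid\boldsymbol{y})\big]=\sum_{k=1}^{D}\psi((\boldsymbol{z})^{k}\mid\boldsymbol{y})$ with $\psi((\boldsymbol{z})^{k}\mid\boldsymbol{y})=\mathbb{E}_{\boldsymbol{y}}\big[\operatorname{Var}((\boldsymbol{z})^{k}\mid\boldsymbol{y})\big]$. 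The structural point is that $\tilde{\boldsymbol{z}}$ and $\boldsymbol{z}$ are \emph{coupled} through the Hadamard reweighting $\tilde{\boldsymbol{h}}=\boldsymbol{h}\odot\mathcal{R}$ of Equation~\ref{eq:hadama}: working at the level of the DR-aware embedding (or, equivalently, treating the projection head $g^{DRIN}_{\vartheta}$ as coordinate-wise for this second-order analysis), we get $(\tilde{\boldsymbol{z}})^{k}=\omega_{k}\,(\boldsymbol{z})^{k}$ with $\omega_{k}\in[0,1]$ — the bound on $\omega_k$ coming either from the soft-mask parametrization of $\mathcal{R}$ or, absent such a parametrization, from normalizing $\max_k\omega_k=1$ using the per-sample scale invariance of the cosine-similarity InfoNCE in Equation~\ref{eq:drin}.

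Given this, the per-dimension statement is immediate: since the reweighting is coordinate-wise and deterministic given $\mathcal{R}$, $\psi((\tilde{\boldsymbol{z}})^{k}\mid\boldsymbol{y})=\omega_{k}^{2}\,\psi((\boldsymbol{z})^{k}\mid\boldsymbol{y})$. Optimality of $\mathcal{R}$ enters here: I would argue that the meta-objective of Equation~\ref{eq:all loss} drives $\omega_{k}\to1$ on the discriminative (rationale) coordinates, because under the minimal GCL assumption (the contrastive label is invariant to the distortions) a coordinate carrying consistent cross-view signal contributes to the alignment of positive pairs, so shrinking it strictly increases $\mathcal{L}_{DRIN}$ with no compensating gain; hence $\psi((\tilde{\boldsymbol{z}})^{k}\mid\boldsymbol{y})\cong\psi((\boldsymbol{z})^{k}\mid\boldsymbol{y})$ on those coordinates, which is the second assertion of Theorem~\ref{the:tighter}.

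Summing the per-dimension identity over $k$ then yields the aggregate bound,
\[
\psi(\tilde{\boldsymbol{z}}\mid\boldsymbol{y})=\sum_{k=1}^{D}\omega_{k}^{2}\,\psi((\boldsymbol{z})^{k}\mid\boldsymbol{y})\;\le\;\sum_{k=1}^{D}\psi((\boldsymbol{z})^{k}\mid\boldsymbol{y})=\psi(\boldsymbol{z}\mid\boldsymbol{y}),
\]
the inequality being strict as soon as some non-rationale coordinate has $\omega_{k}<1$ and positive conditional variance — exactly what an optimal $\mathcal{R}$ produces when it suppresses the noisy, task-agnostic dimensions, consistent with the high-dimensional concentration picture of Figure~\ref{fig:sphere}. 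If needed I would additionally invoke the decorrelation term of $\mathcal{L}_{RR}$ to argue the class-conditional covariance is approximately diagonal, so the trace form is the right object and the reduction is not an artifact of discarded off-diagonal mass. Combined with $\mathcal{L}_{DRIN}(\tilde{\boldsymbol{z}})\le\mathcal{L}_{IN}(\boldsymbol{z})$ (the identity reweighting being feasible), this tightens the upper bound in Theorem~\ref{the:connect}, which is the ultimate payoff.

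The main obstacle, where I expect to spend most of the effort, is making ``$\mathcal{R}$ is optimal'' operational so that it delivers \emph{both} conclusions at once: the trivial minimizer $\omega\equiv0$ satisfies the first inequality but destroys the second, so I must show the joint objective excludes collapse (e.g.\ the decorrelation term of $\mathcal{L}_{RR}$ forbids $\bar{\mathbf{z}}\to\mathbf{0}$) and pins the rationale weights near $1$. A second delicate point is the bridge between the pretraining losses, which never see $\boldsymbol{y}$, and a statement about $\operatorname{Var}(\cdot\mid\boldsymbol{y})$ — precisely the role of the ``label invariant to the distortions'' assumption, but quantifying it rigorously rather than heuristically is the crux. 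Finally, the coordinate-wise treatment of $g^{DRIN}_{\vartheta}$ is a genuine simplification; a fully rigorous version would either state the theorem for the pre-projection embeddings or impose a structural/Lipschitz hypothesis on the projection head, and I would flag this explicitly.
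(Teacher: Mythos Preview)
Your proposal is essentially the same argument as the paper's: decompose $\psi$ as a sum over (independent) coordinates via the redundancy-reduction assumption, substitute $(\tilde{\boldsymbol{z}})^{k}=\omega_{k}(\boldsymbol{z})^{k}$ from Equations~\ref{eq:hadama}--\ref{eq:rationale acquire}, pull out $\omega_{k}^{2}$, use $\omega_{k}\le 1$, and sum; the per-dimension $\cong$ claim is deferred to \cite{metamask}. If anything you are more scrupulous than the paper about the assumptions --- the role of $\mathcal{L}_{RR}$ in justifying the diagonal decomposition, the source of the bound $\omega_{k}\le 1$, the handling of the projection head, and the collapse issue --- all of which the paper either asserts or silently absorbs into ``when $\mathcal{R}$ is optimal'' and ``suppose our redundancy reduction part can best decorrelate the dimensions.''
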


\begin{proof}
Suppose our redundancy reduction part can best decorrelate the dimensions in graph embeddings, we have
\begin{align}\small
\psi \left( \tilde{\boldsymbol{z}} \Big| y \right) 
&\overset{(1)}{=} \sum_{k=1}^D \psi \left( \left( \tilde{\boldsymbol{z}} \right)^k \Big| y \right) \\ 
&\overset{(2)}{=} \sum_{k=1}^D \psi \left( \omega_k \left( \boldsymbol{z} \right)^k \Big| y \right) \\
&\overset{(3)}{=} \sum_{k=1}^D \omega_k^2 \psi \left( \left( \boldsymbol{z} \right)^k \Big| y \right) \\
&\overset{(4)}{\leq} \sum_{k=1}^D \psi \left( \left( \boldsymbol{z} \right)^k \Big| y \right)
\overset{(5)}{=} \psi \left( \boldsymbol{z} \Big| y \right),
\end{align}
where (1) holds because each dimension is independent of the others;
(2) is derived by Equation \ref{eq:hadama} and Equation \ref{eq:rationale acquire};
(3) is acquired by the property of variance that $\psi(Ax)=A^2\psi(x)$ if $A$ is a random variable;
(4) holds because $\omega_k \leq 1$;
(5) holds due to Equation 14 in \cite{chaos}. 
\end{proof}
\cite{metamask} has already proved the equality part of Equation \ref{eq:tight var}. 
Thus, we further provide the proof for the inequality part in Equation \ref{eq:tight var} as above.
We incorporate Theorem \ref{the:tighter} into Theorem \ref{the:connect} in order to infer an outcome: our methodology can more effectively limit the downstream classification risk. This means the upper and lower limits of supervised cross-entropy loss established by DRGCL are more constrained compared to those obtained through conventional GCL techniques.

\section{Experiments}\label{sec:experiments}

\subsection{Experimental Setup}
For unsupervised learning, we benchmark DRGCL on eight established datasets in TU datasets \cite{tudataset}. 
The baselines include Graphlet Kernel (GL) \cite{gl}, Weisfeiler-Lehman Sub-tree Kernel (WL) \cite{wl}, Deep Graph Kernels (DGK) \cite{dgk}, Node2Vec \cite{node2vec}, Sub2Vec \cite{sub2vec}, Graph2Vec \cite{graph2vec}, InfoGraph \cite{infograph}, GraphCL \cite{graphcl}, ADGCL \cite{adgcl}, JOAO \cite{joao}, RGCL \cite{rgcl} and SimGRACE \cite{simgrace}.
For transfer learning, we perform pre-training on ZNIC-2M \cite{znic} and finetune on eight multi-task binary classification datasets \cite{znicdown}. 
The baselines include six of nine methods the same as the ones in unsupervised learning and two different pre-train strategies in \cite{gnnpretrain}, i.e., attribute masking and context prediction.
Furthermore, we evaluated the transferability of our approach on the PPI-306k \cite{ppi} dataset. 
The details of datasets are in \textbf{Appendix} \ref{app:dataset}. 
The evaluate protocols and the model architectures are summarized in \textbf{Appendix} \ref{app:evaluate protocols} and \textbf{Appendix} \ref{app:model configurations}. 
The hyper-parameter analysis is in \textbf{Appendix} \ref{app:sensitivity analysis}.


\subsection{Unsupervised Learning}
The results of unsupervised graph-level representations for downstream graph classification tasks are shown in Table \ref{tab:unsupervised learning}. Our method consistently ranks among the top 3 and achieves the lowest average rank of 2.8, outperforming the SR-based method RGCL and other methods without rationalizations. 
The findings demonstrate the capability of our method to learn discriminative representations.

\subsection{Transfer Learning}
The results of transfer learning on ZNIC-2M are presented in Table \ref{tab:transfer learning}. 
By utilizing DR to construct embeddings that preserve semantic information, our DRGCL framework achieves top-3 performance on six out of eight datasets and exhibits the highest average accuracy compared to existing baselines.
Our method demonstrates superior transferability compared to other baselines, providing empirical evidence that it can learn more essential rationales in graphs.

The transfer learning results on PPI-306K are shown in Table \ref{tab:transfer learning on bio dataset}, where our method shows competitive or better transferability than other pre-training schemes.

\begin{figure*}[t]
    \centering	\includegraphics[width=0.75\textwidth]{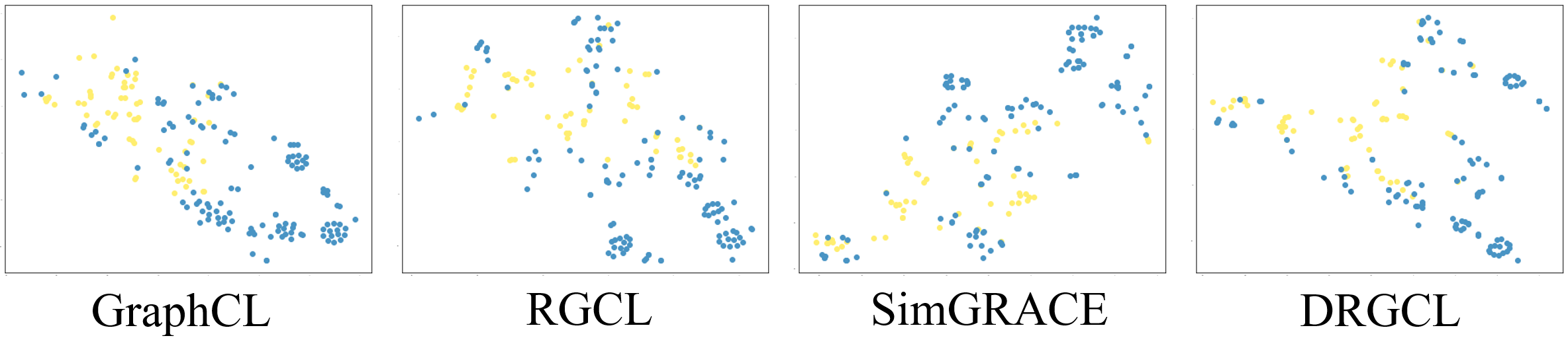}
	\caption{T-SNE visualization of four methods on MUTAG.}
	\label{fig:tsne}
\end{figure*}

\begin{figure}[t]
\begin{center}
\includegraphics[width=0.42\textwidth]{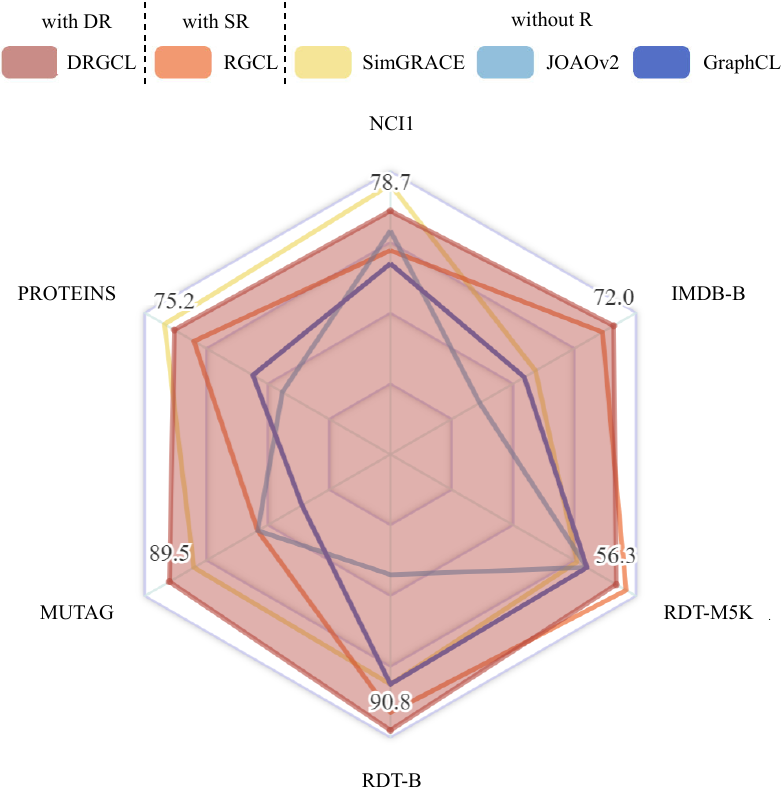} \vspace{-0.05cm}
\caption{Visualization of unsupervised learning results on six data sets for the top-5 methods. \textbf{with DR} denotes our method with DR, \textbf{with SR} denotes the SR method RGCL, and \textbf{without R} denotes the other methods without using rationale.}
\vskip -0.66cm
\label{fig:radar}
\end{center}
\end{figure}

\subsection{Ablation Studies}
We conducted ablation studies in transfer learning, shown in Table \ref{tab:transfer learning ablation}. Our method experiences a decrease of 0.6 in the average score when the DR weight is removed (w/o DR), highlighting the significance of DR. And the decrease of 1.5 in the average score when eliminating the redundancy part (w/o RR) reveals the importance and functionality of RR. It is important to note that the framework GraphCL represents the absence of DR and RR. Notably, both the results of w/o DR and w/o RR outperform GraphCL, emphasizing the positive impact of the two components. 

\subsection{Transfer Learning with Different Fixed $\mathcal{R}$}
In ablation studies, the setting without DR is equivalent to pre-training our model with a fixed dimensional weight, i.e., $\mathcal{R}$, where the meta-learning module is not applied to keep the fixed $\mathcal{R}$ and each dimension of $\mathcal{R}$ is set to 1. In this experiment, we further explore the results of different fixed dimensional weights $\mathcal{R}$ when the DR is not applied. We conducted experiments in transfer learning on ZNIC-2M with fixed $\mathcal{R}$ in $[0.1,0.3,0.7,1.0]$, where each dimension of $\mathcal{R}$ is set the same value.  
The results of transfer learning on ZNIC-2M with different fixed $\mathcal{R}$ are shown in Table \ref{tab:transfer learning with different R}. We notice a consistent phenomenon that our DRGCL method with the DR module which updates $\mathcal{R}$ in a meta-learning manner outperforms the other four methods with different fixed dimensional weights.
This experiment further proves the significance of the DR.

\subsection{Visualization Results}

In Figure \ref{fig:radar}, we visualize the experimental results of the unsupervised learning comparisons.
We plot a radar chart with each direction denoting a dataset, the vertexes of the lines denoting the downstream classification results, and the different colors denoting the top-5 methods of unsupervised learning. Note the scale of each direction is different. 
The visualization results significantly show the performance superiority of the proposed DRGCL over benchmarks. This observation further proves the validity of our findings, i.e., compared with the conventional SR and methods without R, the DR is relatively intrinsic to graphs.

\section{Conclusion} \label{sec:conclusion}
In this paper, we elucidate the causal association among graph embeddings, contrastive labels, and graph DRs, subsequently formulating it through the application of a rigorous SCM.
To eliminate the task-agnostic information during 
pre-training, we propose DRGCL as an intuitive approach to adaptively capture DRs in graph embeddings, which introduces a learnable DR weight updated by a bi-level optimization and a graph DR redundancy reduction regularization term implemented. Benefiting from acquiring DR and reducing the redundancy in graph embeddings, our method achieves new state-of-the-art performance compared to various GCL methods on multiple benchmarks.

\textbf{Limitations and broader impacts}.
Due to the needing for a bi-level optimization, it will cost more time to train a model with the ability to capture DR-aware representations. Besides, our method can be seen as a plug-and-play layer that can be used with any GCL method on any feature-based dataset. Thus, it's worth exploring the combinations of the rationales during different procedures of GCL, which may be a good research direction next.

\section*{Acknowledgements}
The authors would like to thank the editors and reviewers for their valuable comments. This work is supported by the Fundamental Research Program, Grant No. JCKY2022130C020, the National Funding Program for Postdoctoral Researchers, Grant No. GZC20232812, the CAS Project for Young Scientists in Basic Research, Grant No. YSBR-040, the Youth Innovation Promotion Association CAS, No. 2021106, 2022 Special Research Assistant Grant Project, No. E3YD5901, and the China Postdoctoral Science Foundation, No. 2023M743639.

\newpage


\bibliography{aaai24}

\newpage

\appendix

\section{Experimental Details}

\subsection{Datasets}\label{app:dataset}
For unsupervised learning, we benchmark our proposed DRGCL on eight established datasets in TU datasets \cite{tudataset} including four bioinformatics datasets (NCI1, PROTEINS, DD, MUTAG) and four social network networks datasets (COLLAB, RDT-B, RDT-M, IMDB-B). The details of the datasets are shown in Table \ref{tab:unsupervised datasets}. For transfer learning, we first perform pre-training on ZNIC-2M \cite{znic} dataset and PPI-306K \cite{ppi} dataset. Then we finetune the pretrained-ZNIC model on eight benchmark multi-task binary classification datasets (BBBP, Tox21, ToxCast, SIDER, ClinTox, MUV, HIV, BACE) in the biochemistry domain, which are contained in MoleculeNet \cite{znicdown} and finetune the pretrained-PPI model on PPI-306K dataset. Details of ZNIC-2M and PPI-306K datasets are shown in Table \ref{tab:transfer datasets}.

\begin{table}[ht]
\caption{Datasets statistics for unsupervised learning. BM denotes Biochemical Molecules. SN denotes Social Networks. Num. denotes number. Avg. denotes average.}
\label{tab:unsupervised datasets}
\setlength{\tabcolsep}{2pt}
\begin{center}
    \begin{small}
        \begin{tabular}{ccccc}
\toprule
Datasets & Category & Graph Num. & Avg. Node & Avg. Degree \\
\midrule
NCI1 & BM & 4110 & $29.87$ & $1.08$ \\
PROTEINS & BM & 1113 & $39.06$ & $1.86$ \\
DD & BM & 1178 & $284.32$ & $2.51$ \\
MUTAG & BM & 188 & $17.93$ & $1.10$ \\
\midrule COLLAB & SN & 5000 & $74.49$ & $32.99$ \\
RDT-B & SN & 2000 & $429.63$ & $1.15$ \\
RDT-M & SN & 4999 & $508.52$ & $1.16$ \\
IMDB-B & SN & 1000 & $19.77$ & $4.88$ \\
\bottomrule
\end{tabular}
    \end{small}
\end{center}
\end{table}

\begin{table}[ht]
\caption{Datasets statistics for transfer learning. PT denotes pre-training. FT denotes finetuning.}
\label{tab:transfer datasets}
\setlength{\tabcolsep}{2pt}
\centering
\small
\begin{tabular}{cccccc}
\toprule Datasets & Utilization & Graph Num. & Avg. Node & Avg. Degree \\
\midrule ZINC-2M & PT & 2000000 & $26.62$ & $57.72$ \\
PPI-306K & PT \& FT & 306925 & $39.82$ & $729.62$ \\
\midrule BBBP & FT & 2039 & $24.06$ & $51.90$ \\
Tox21 & FT & 7831 & $18.57$ & $38.58$ \\
ToxCast & FT & 8576 & $18.78$ & $38.52$ \\
SIDER & FT & 1427 & $33.64$ & $70.71$ \\
ClinTox & FT & 1477 & $26.15$ & $55.76$ \\
MUV & FT & 93087 & $24.23$ & $52.55$ \\
HIV & FT & 41127 & $25.51$ & $54.93$ \\
BACE & FT & 1513 & $34.08$ & $73.71$ \\
\bottomrule
\end{tabular}
\end{table}

\subsection{Evaluation protocols}\label{app:evaluate protocols}
In accordance with prior research on graph-level self-supervised representation learning \cite{infograph, graphcl}, we assess the discriminability and transferability of the acquired representations in unsupervised and transfer settings. To accomplish this, we utilize the complete datasets to train DRGCL and obtain graph representations using DRs on unsupervised datasets. Subsequently, these representations are employed as input for a downstream SVM classifier with 10-fold cross-validation. We conduct five runs with different seeds on each dataset and report the mean and standard deviation of classification accuracy. For transfer learning, we pre-train our method on the corresponding datasets and repeat the finetuning procedure ten times with various seeds to evaluate the mean and standard deviation of ROC-AUC scores on each downstream dataset. 

\subsection{Model configurations}\label{app:model configurations}
We use the graph isomorphism network (GIN) \cite{gin} as the encoder following \cite{infograph,graphcl} to attain graph representations for unsupervised and transfer learning. The details of our model architectures and corresponding hyper-parameters are summarized in Table \ref{tab:model architectures and hyper-parameters}.

\begin{table}[htp]
\caption{Model architectures and hyper-parameters.}
\label{tab:model architectures and hyper-parameters}
\setlength{\tabcolsep}{1pt}
\begin{center}
    \begin{small}
        \begin{tabular}{ccccc}
\toprule
Experiment & Unsupervised & Transfer\\
 & learning & learning\\
\midrule
Backbone GNN type & GIN & GIN\\
Backbone neuron & [32,32,32] & [300,300,300,300,300] \\
D. R. Gen. neuron & 96 & 300 \\
Projection neuron & [512,512,512] & [300,300]\\
Pooing type & Global add pool & Global mean pool\\
Pre-train $lr$ & 0.01 & 0.001\\
Finetune $lr$ & - & \{0.01,0.001,0.0001\}\\
Temperature $\tau$ & 0.1 & 0.1\\
Traning epochs & 20 & \{60,80,100\}\\
Trade-off parameter $\lambda$ & 0.001 & 0.001\\
Trade-off parameter $\alpha$ & 10 & 10\\
\bottomrule
\end{tabular}
    \end{small}
\end{center}
\end{table}

\subsection{Running Environment}
We have implemented our method using PyTorch 1.10. The results of unsupervised learning and transfer learning were obtained using a single Geforce RTX3090 GPU with 24G of memory. We performed the experiments on Ubuntu 18.04 as our operating system.

\subsection{Model Complexity}
We evaluate our model's complexity on NCI1 from two aspects, shown in Table \ref{tab:model-complexity}. The GPU usages are similar. The time complexity of our method is higher than GraphCL due to the meta-learning stage, while compared to RGCL, our method only requires half the time and achieves better results.

\begin{table}[htp]
\caption{Model complexity.}
\label{tab:model-complexity}
\setlength{\tabcolsep}{7pt}
\begin{center}
    \begin{small}
        \begin{tabular}{cccc}
\toprule
Models & GraphCL & RGCL & DRGCL \\
\midrule
GPU usage & 1652M & 1766M & 1702M \\
Time cost (1 epoch) & 19.42s & 117.04s & 52.43s \\
\bottomrule
\end{tabular}
    \end{small}
\end{center}
\end{table}

\section{Sensitivity Analysis}
\label{app:sensitivity analysis}
\subsection{Batch size $N$}
In this experiment, we evaluate the effect of batch size $N$ on our model performance. Figure \ref{fig:aba batchsize} shows the classification accuracy of our models after training twenty epochs using different batch sizes from 64 to 512 on NCI1 and REDDIT-BINARY. From the line chart, we can observe that our method gets the best result when setting the batch size to 128.
While batch size is less than 128, larger batch size will bring more negative pairs which can 
help promote the convergence of our GCL loss. While batch size is more than 128, too big a batch size will slow down the speed of model convergence, leading the lower accuracy on downstream tasks.

\begin{figure}[ht]
\begin{center}
\includegraphics[width=0.475\textwidth]{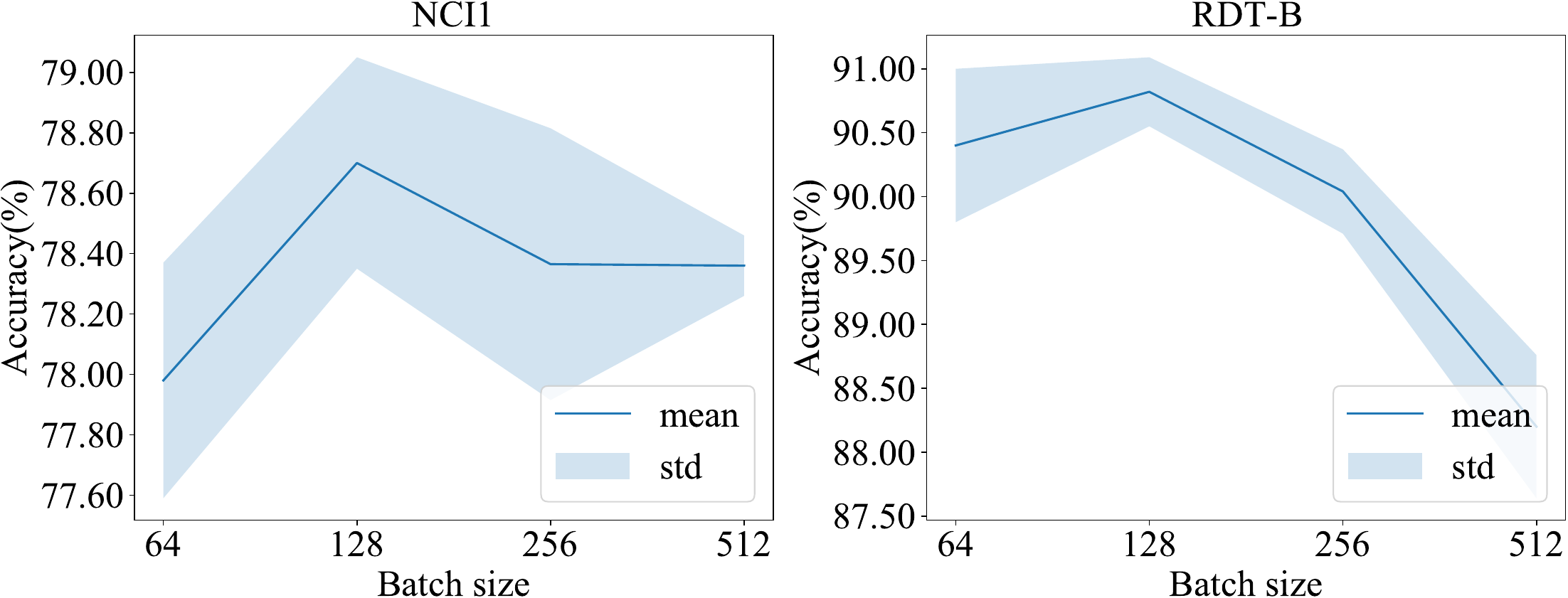}
\caption{Sensitivity analysis for batch size $N$.}
\label{fig:aba batchsize}
\end{center}
\end{figure}

\subsection{Training epochs $T$}
In this experiment, we investigate the impact of training epochs $T$ on the performance of our model. We conduct training sessions with varying numbers of epochs, specifically $\{ 20, 40, 60, 80, 100 \}$, on REDDIT-BINARY. Figure \ref{fig:aba epochs} illustrates the training progress of our method, demonstrating that the accuracy consistently improves with an increase in the number of training epochs.

\begin{figure}[ht]
\begin{center}
\includegraphics[width=0.25\textwidth]{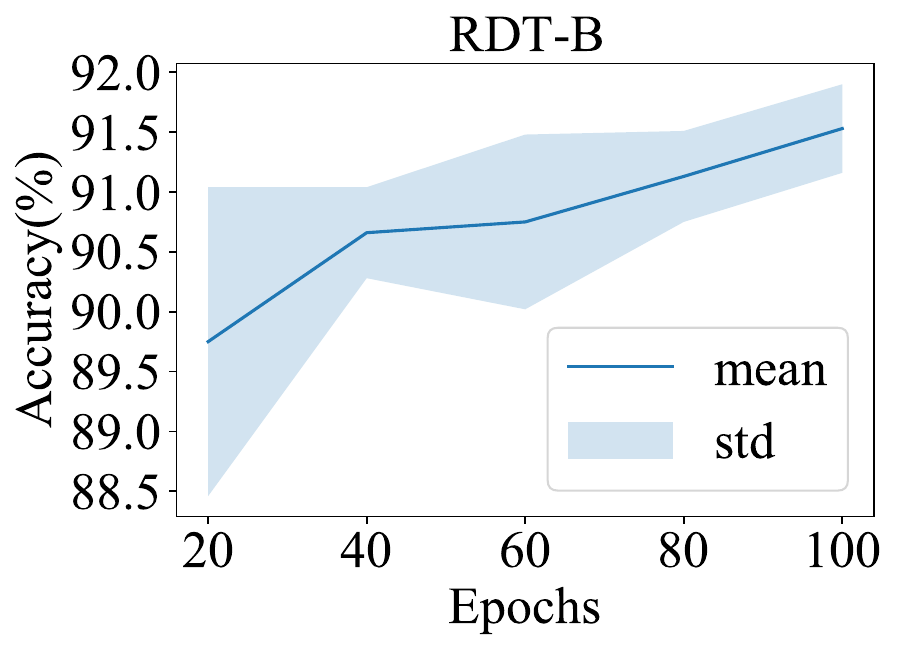}
\caption{Sensitivity analysis for training epochs $T$.}
\label{fig:aba epochs}

\end{center}
\end{figure}

\subsection{Trade-off hyper-parameter $\alpha$}
In this experiment, we investigate the selection of trade-off hyper-parameter $\alpha$ in Equation \ref{eq:all loss}. We conduct experiments under unsupervised settings on NCI1 and RDT-B. The results are shown in Figure \ref{fig:aba alpha}. The comparison results show that an elaborate assignment of $\alpha$ can indeed improve the performance of DRGCL: when $\alpha = 10$ and $100$, the accuracy of DRGCL achieves the peak value, respectively.

\begin{figure}[ht]

\includegraphics[width=0.475\textwidth]{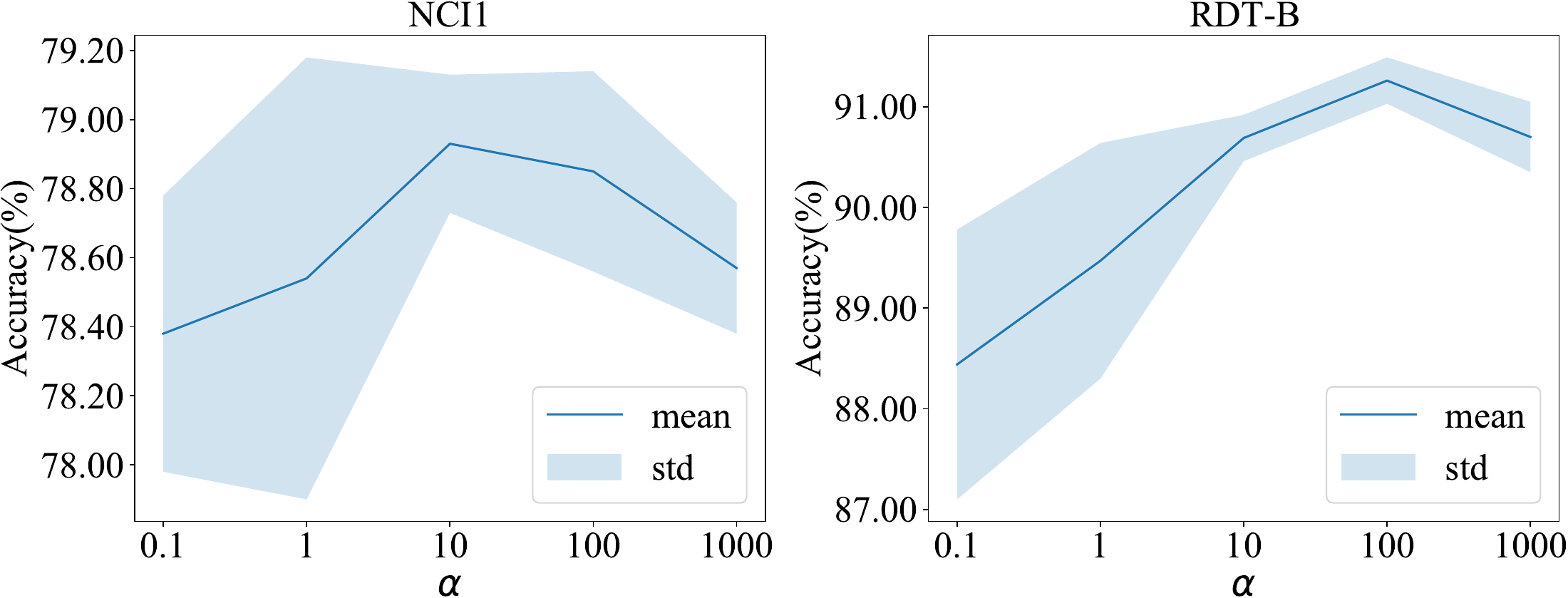}
\caption{Sensitivity analysis for trade-off hyperparameter $\alpha$.}
\label{fig:aba alpha}
\end{figure}

\end{document}